\newtheorem{theorem}{Theorem}
\newcommand{\bx}{\mathbf{x}}
\newcommand{\by}{\mathbf{y}}
\newcommand{\bK}{\mathbf{G}}
\newcommand{\bk}{\mathbf{g}}
\title{Active Mini-Batch Sampling \\ using Repulsive Point Processes}
\author{
  Cheng Zhangs\thanks{This work was initialized when the author was at Disney Research, and was carried out when the author was at KTH and Microsoft Research.} \\
  Microsoft Research\\
  Cambridge, UK \\
 \texttt{Cheng.Zhang@microsoft.com} \\
 \And
 Cengiz \"Oztireli \\
 Disney Research\\
 Zurich, Switzerland\\
 \texttt{cengiz.oztireli@disneyresearch.com}\\
 \And
 Stephan Mandt \\
 Disney Research\\
 Los Angeles, USA \\
 \texttt{stephan.mandt@disneyresearch.com}\\
 \And
 Giampiero Salvi\\
 KTH Royal Institute of Technology\\
 Stockholm, Sweden\\
 \texttt{giampi@kth.se}
}
\begin{document}
\maketitle
\begin{abstract} 
The convergence speed of stochastic gradient descent (SGD) can be improved by actively selecting mini-batches. 
We explore sampling schemes where similar data points are less likely to be selected in the same mini-batch. 
In particular, we prove that such repulsive sampling schemes lower the variance of the gradient estimator. This generalizes recent work  on using Determinantal Point Processes (DPPs) for mini-batch diversification (Zhang et al., 2017) to the broader class of repulsive point processes.
We first show that the phenomenon of variance reduction by diversified sampling generalizes in particular to
non-stationary point processes. 
We then show that other point processes may be computationally much more efficient than DPPs. In particular, we propose and investigate Poisson Disk sampling---frequently encountered in the computer graphics community---for this task.
We show empirically that our approach improves over standard SGD both in terms of convergence speed as well as final model performance. 
\end{abstract} 
\section{Introduction}
\label{sec:intro}

Stochastic gradient descent (SGD) \cite{bottou2010large} is key to modern scalable machine learning. Combined with back-propagation, it forms the foundation to train deep neural networks \cite{lecun1998efficient}. Applied to variational inference \cite{hoffman13stochastic,zhang2017advances}, it enables the use of probabilistic graphical models on massive data. SGD training has contributed to breakthroughs in many applications \citep{krizhevsky2012imagenet,mikolov2013efficient}.

A key limitation for the speed of convergence of SGD algorithms is the stochastic gradient noise. Smaller gradient noise allows for larger learning rates and therefore faster convergence.
For this reason, variance reduction for SGD is an active research topic.

Several recent works have shown that the variance of SGD can be reduced when diversifying the data points in a mini-batch based on their features
\cite{zhao2014accelerating, fu2017CPSGMCMC, Zhang17Stochastic}. 
When data points are coming from similar regions in feature space, their gradient contributions are positively correlated. 
Diversifying data points by sampling from different regions in feature space de-correlates their gradient contributions, which leads to a better gradient estimation.

Another benefit of actively biasing the mini-batch sampling procedure relates to better model performance
\cite{chang2017active,Shrivastava_2016_CVPR,Zhang17Stochastic}.  
Zhang et al. \cite{Zhang17Stochastic} biased the data towards a more uniform distribution, upsampling data-points in scarce regions and downsampling data points in dense regions, leading to a better performance during test time. 
Chen et al. \cite{chen2015webly} showed that training on simple classification tasks first, and later adding more difficult examples, leads to a clear performance gain compared to training on all examples simultaneously. We refer to such schemes which modify the marginal probabilities of each selected data point as \emph{active bias}. The above results suggest that utilizing an active bias in mini-batch sampling can result in improved performance without additional computational cost.

In this work, we present a framework for active mini-batch sampling based on repulsive point processes. The idea is simple: we specify a data selection mechanism that actively selects a mini-batch based on features of the data. This mechanism introduces \emph{repulsion} between the data points, meaning that it suppresses data points with similar features to co-occur in the same mini-batch. We use repulsive point processes for this task. Finally, the chosen mini-batch is used to perform a stochastic gradient step, and this scheme is repeated until convergence.

Our framework generalizes the recently proposed mini-batch diversification based on determinantal point processes (DPP) \cite{Zhang17Stochastic} to a much broader class of repulsive processes, and allows users to encode any preferred active bias with efficient mini-batch sampling algorithms.
In more detail, our contributions are as follows:
\begin{enumerate}
\item  \textit{We propose to use point processes for active mini-batch selection (Section~\ref{sec:theory}). } \\
   We provide a theoretical analysis which shows that mini-batch selection with repulsive point processes may reduce the variance of stochastic gradient descent. 
   The proposed approach can accommodate point processes with adaptive densities and adaptive pair-wise correlations. Thus, we can use it for data subsampling with an active bias.

\item \textit{Going beyond DPPs, we propose a group of more efficient repulsive point processes based on Poisson Disk Sampling (PDS) (Section \ref{sec:method}).}\\
   We propose PDS with dart throwing for mini-batch selection. Compared to DPPs, this improves the sampling costs from $Nk^3$ to merely $k^2$, where $N$
   is the number of data points 
   and $k$ is the mini-batch size (Section \ref{sec:VannilaPD}). \\ 
   We propose a dart-throwing method with an adaptive disk size and adaptive densities 
   to sample mini-batches with an active bias
   (Section \ref{sec:GenerlizedPD}).  
\item \textit{
We test our proposed method on several machine learning applications (Section \ref{sec:exp}) from the domains of computer vision and speech recognition. We find increased model performance and faster convergence due to variance reduction. }
\end{enumerate}
\section{Related Work}
\label{sec:related}
In this section, we begin with reviewing the most relevant work on \textit{diversified mini-batch sampling}. Then, we discuss the benefits of subsampling schemes with an \textit{active bias}, where data are either reweighed or re-ordered. Finally, we review the relevant aspects of \textit{point processes}.

\paragraph{Diversified Mini-Batch Sampling} Prior research \cite{zhao2014accelerating,fu2017CPSGMCMC, Zhang17Stochastic, yin2017gradient} has shown that sampling diversified mini-batches can reduce the variance of stochastic gradients. It is also the key to overcome the problem of the saturation of the convergence speed in the distributed setting \cite{yin2017gradient}. Diversifying the data is also computationally efficient for large-scale learning problems 
\cite{zhao2014accelerating,fu2017CPSGMCMC, Zhang17Stochastic}. 

 Zhang et. al.~\cite{Zhang17Stochastic} recently proposed to use DPPs for diversified mini-batch sampling and drew the connection to stratified sampling \cite{zhao2014accelerating}  
and clustering-based preprocessing for SGD \cite{fu2017CPSGMCMC}.
A disadvantage of the DPP-approach is the computational overhead.
Besides presenting a more general theory, we provide more efficient point processes in this work. 

\paragraph{Active Bias} Different types of active bias in subsampling the data can improve the convergence and lead to improved final performance in model training \cite{alain2015variance,gopal2016adaptive, chang2017active,chen2015webly, Shrivastava_2016_CVPR}. As summarized in \cite{chang2017active}, self-paced learning biases towards easy examples in the early learning phase. Active-learning, on the other hand, puts more emphasis on uncertain cases, and hard example mining focuses on the difficult-to-classify examples. 

Chang et. al. ~\cite{chang2017active} investigate a supervised setup and sample data points, which have high prediction variance, more often. \cite{gopal2016adaptive}
maintain a desired class distribution during mini-batch sampling. Diversified mini-batch sampling with DPPs \cite{Zhang17Stochastic} re-weights the data towards a more uniform distribution, which improves the final performance when the data set is imbalanced.

The choice of a good active bias depends on the data set and problem at hand. Our proposed method is compatible with different active bias preferences.

\paragraph{Point Processes} Point processes have a long history in physics and mathematics, and are heavily used in computer graphics \cite{macchi1975coincidence,ripley1976second,illian2008statistical,Oztireli12Analysis,lavancier2015determinantal}. DPPs, as a group of point processes, have been introduced and used in the machine learning community in recent years \cite{kulesza2012determinantal,li2015efficient,kathuria2016batched}. 

Other types of point processes have been less explored and used in machine learning. There are many different repulsive point processes, such as 
PDS, or Gibbs processes, with properties similar to DPPs, but with significantly higher sampling efficiency  \cite{illian2008statistical}. Additionally, more flexible point processes with adaptive densities and interactions are well studied in computer graphics \cite{li2010anisotropic,Rov17a,kita2016multi}, but not explored much in the machine learning community. Our proposed framework is based on generic point processes. As one of the most efficient repulsive point processes, we advocate Poisson disk sampling in this paper.

\section{Repulsive Point Processes for Variance Reduction}
\label{sec:theory}
In this section, we first briefly introduce our main idea of using point processes for mini-batch sampling in the context of the problem setting (Section \ref{sec:probelmS}) and revisit point processes (Section \ref{sec:point_processes}). 
We prove that any repulsive point process can lead to reduced gradient variance in SGD (Section \ref{sec:PP_minibatch}), and discuss the implications of this result. The theoretical analysis in this section leads to multiple practical algorithms 
in Section \ref{sec:method}. 

\subsection{Problem Setting}
\label{sec:probelmS}
\begin{figure}
\centering
\subfigure[Non-Repulsive]{
\includegraphics[width=3.2cm,frame]{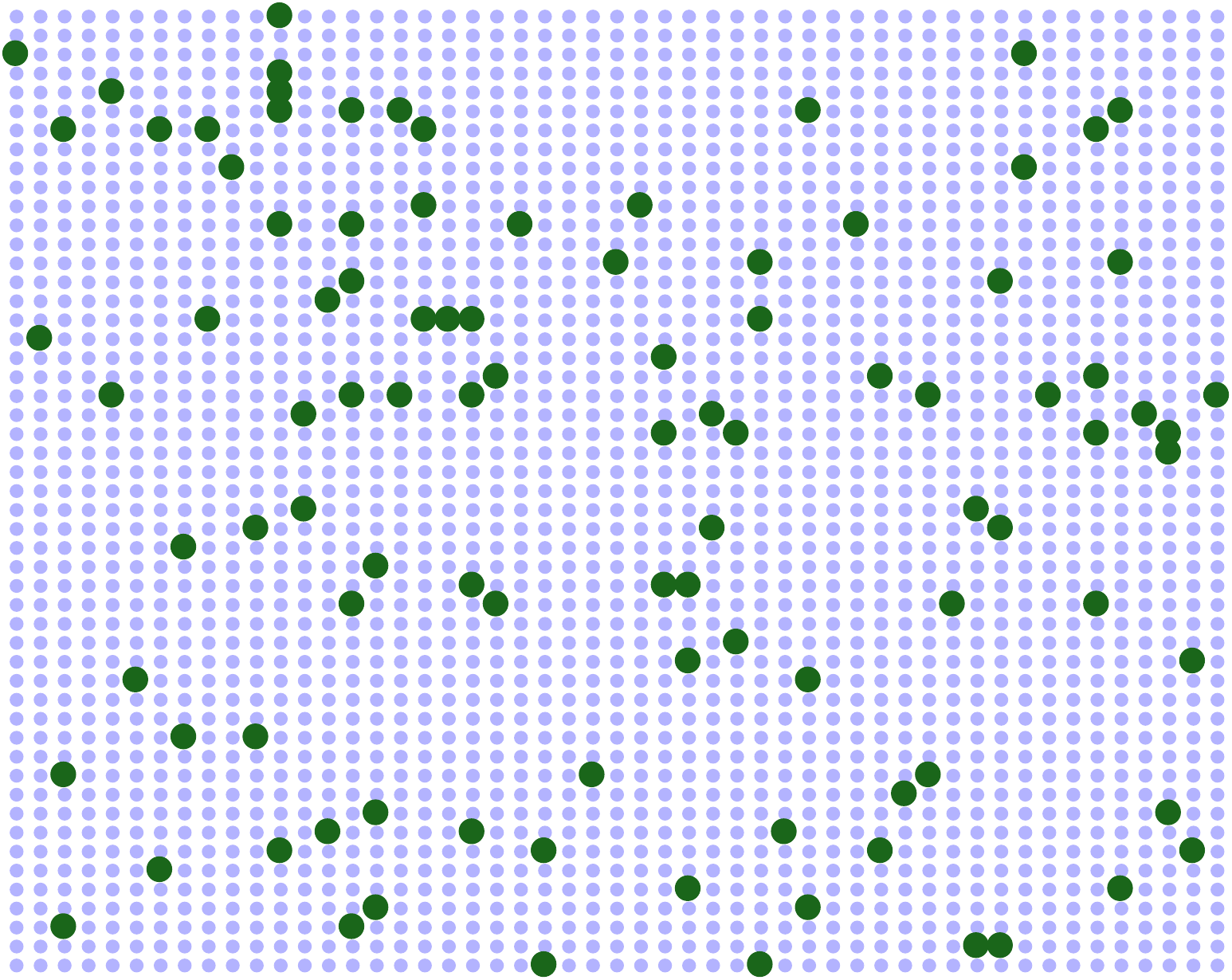}
\label{fig:toysubset:random}
}
\hspace{5mm}
\subfigure[Stationary]{
\includegraphics[width=3.2cm,frame]{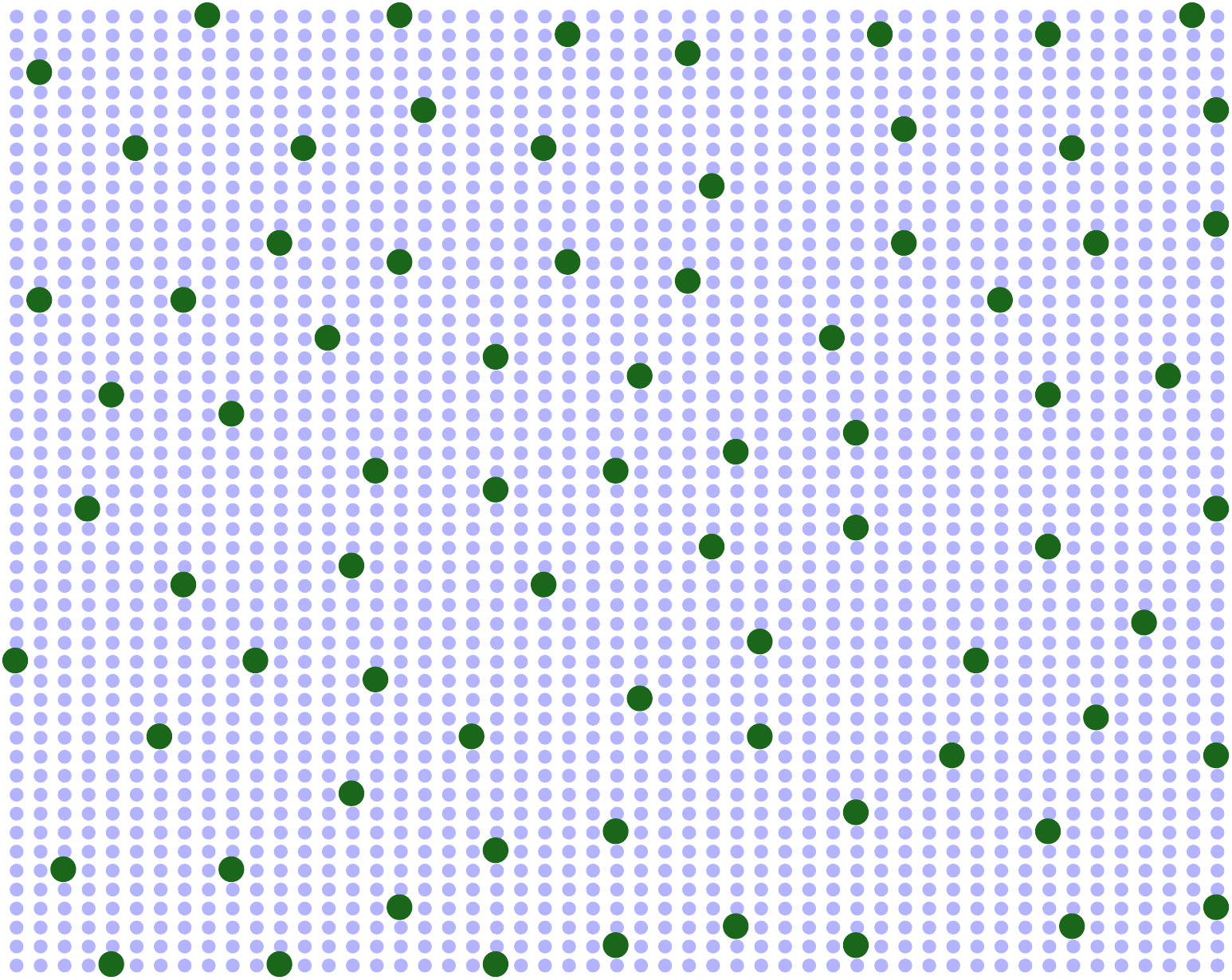}
\label{fig:toysubset:stationary}
}
\hspace{5mm}
\subfigure[Non-Stationary]{
\includegraphics[width=3.2cm,frame]{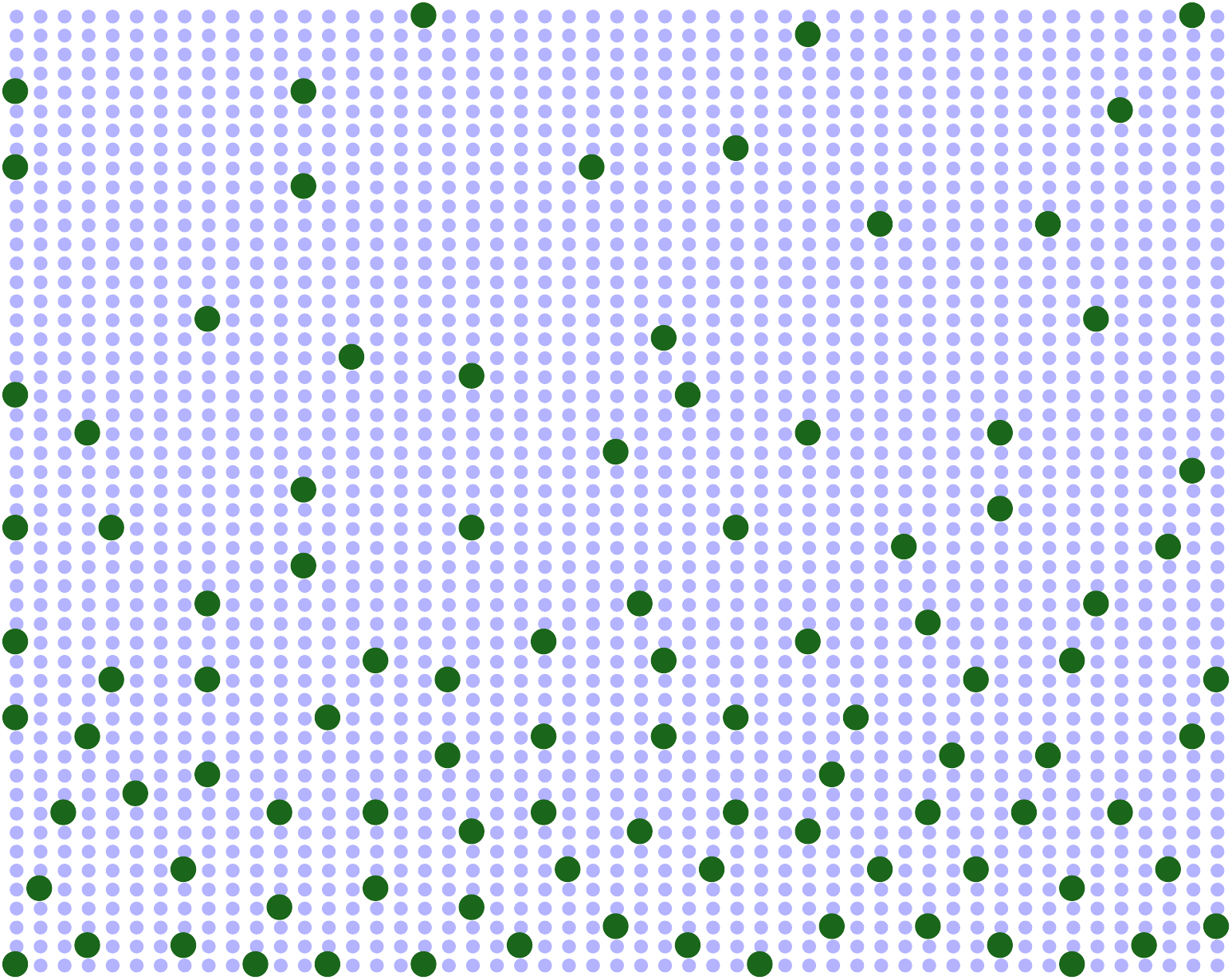}
\label{fig:toysubset:dense}
}
\caption{
Examples of sampling a subset of data points.  We sampled different subsets of 100 points (dark green) from a bigger set of points (light blue), using three different point processes. Panel ~\ref{fig:toysubset:random} shows a uniformly randomly sampled subset. Panel~\ref{fig:toysubset:stationary} and panel ~\ref{fig:toysubset:dense} show two examples with different repulsive point processes.
}
\label{fig:toysubset}
\end{figure}
Consider a loss function $\ell(\bx,\theta)$, where $\theta$ are the model parameters, and $\bx$ indicates the data. 
In this paper, we consider a modified empirical risk minimization problem ~\citep{Zhang17Stochastic}:
\begin{equation}
\hat{J}(\theta) = \mathbb{E} _{\hat{\bx} \sim \mathcal{P}}[\ell(\hat{\bx}, \theta) ] .
\label{eq:machLearnGenFormEstimated}
\end{equation}
 $\mathcal{P}$ indicates a point process defining a distribution over subsets $\hat{\bx}$ of the data, which will be specified below. Note that this leads to a  potentially biased version of the standard empirical risk \citep{bottou2010large}.. 

We optimize Eq.~\ref{eq:machLearnGenFormEstimated} via stochastic gradient descent, which leads to the updates

\begin{equation}
\label{eq:biased-update}
\theta_{t+1} = \theta_t - \rho_t \hat{G}  
=\theta_t - \rho_t {\textstyle \frac{1}{|B|}}\sum_{i \in B} \nabla \ell(x_i,\theta), \quad B\sim {\rm \mathcal{P}}.
\end{equation}
$B \subset \{1,\dots,N\}$, a set of data indices that define the mini-batch.
$\hat{G}$ is the gradient estimated from a mini-batch. 
The data points chosen for each mini-batch are drawn from a point process $\mathcal{P}$, which defines probability measures over different mini-batches.
Therefore, our scheme generalizes SGD in that the data points in the mini-batch $B$ are selected actively, rather than uniformly.

Figure~\ref{fig:toysubset} shows examples of subset selection using different point processes.
Drawing the data randomly without replacement corresponds to a point process as well, thus standard SGD trivially belongs to the class of algorithms considered here. In this paper, we investigate different point processes and analyze how they improve the performance of different models on empirical data sets.

\subsection{Background on Point Processes}
\label{sec:point_processes}

Point processes are generative processes of collections of points in some measure space~\cite{Moller03Statistical,illian2008statistical}.
They can be used to sample subsets of data with various properties, either from continuous spaces or from discrete sets, such as a finite dataset.
In this paper, we explore different point processes
to sample mini-batches with different statistical properties.

More formally, a point process $\mathcal{P}$ in $\mathbb{R}^d$ can be defined by considering the joint probabilities of finding points generated by this process in infinitesimal volumes. One way to express these probabilities is via \emph{product densities}. Let $\bx_i$ denote some arbitrary points in $\mathbb{R}^d$, and $\mathcal{B}_i$ infinitesimal spheres centered at these points with volumes $d\bx_i = | \mathcal{B}_i |$. Then the $n^{th}$ order product density $\varrho^{(n)}$ is defined by 
$$
p(\bx_1, \cdots, \bx_n) = \varrho^{(n)} (\bx_1, \cdots, \bx_n) d\bx_1 \cdots d\bx_n,$$
where $p(\bx_1, \cdots, \bx_n)$ is the joint probability of having a point of the point process $\mathcal{P}$ in each of the infinitesimal spheres $\mathcal{B}_i$. We can use $\mathcal{P}$ to generate infinitely many point configurations, each corresponding to e.g. a mini-batch.

For example, DPP defines this probability of sampling a subset as being proportional to the determinant of a kernel matrix. 
 It is thus described by the $n^{\text{th}}$ order product density~\cite{lavancier2012statistical}: $
\varrho^{(n)}(\bx_1, \cdots, \bx_n) = det[C](\bx_1, \cdots, \bx_n)$,
where 
$det[C](\bx_1, \cdots, \bx_n)$ is the determinant of the $n \times n $ sized sub-matrix of kernel matrix C with entries specified by $\bx_1, \cdots, \bx_n$.

For our analysis, we will just need the first and second order product density, which are commonly denoted by $\lambda(\bx):= \varrho^{(1)}(\bx)$, $\varrho(\bx, \mathbf{y}) := \varrho^{(2)}(\bx, \mathbf{y})$. An important special case of point processes is stationary processes. For such processes, the point distributions generated are translation invariant, where the intensity is a constant.

\subsection{Point Processes for Active Mini-Batch Sampling}
\label{sec:PP_minibatch}
Recently, Zhang et al.\cite{Zhang17Stochastic} investigated how to utilize a particular type of point process, DPP, for mini-batch diversification.
Here, we generalize the theoretical results to arbitrary stochastic point processes, and elaborate on how the resulting formulations can be utilized for SGD based algorithms. This opens the door to exploiting a vast literature on the theory of point processes, and efficient algorithms for sampling.

SGD-based algorithms utilize the estimator $\hat{\mathbf{G}}(\theta) = \frac{1}{|B|}\sum_{i \in B} \nabla \ell(x_i,\theta)$ for the gradient of the objective   (Section~\ref{sec:probelmS}). Each mini-batch, i.e. set of data points in this estimator, can be considered as an instance of an underlying point process $\mathcal{P}$. 
Our goal is to design sampling algorithms for improved learning performance by altering the bias and variance of this gradient estimator.

We first derive a closed form formula for the variance $\mathrm{var}_{\mathcal{P}} (\hat{\bK})$ of the gradient estimator for general point processes. We then show that, under mild regularity assumptions, repulsive point processes generally imply variance reduction. For what follows, let $\mathbf{g}(\mathbf{x},\theta) = \nabla \ell(\bx,\theta)$ denote the gradient of the loss function, and recall that $k = |B|$, the mini-batch size.

\begin{theorem}
The variance $\mathrm{var}_{\mathcal{P}} (\hat{\bK})$ of the gradient estimate $\hat{\bK}$ in SGD for a general stochastic point process $\mathcal{P}$ is given by:
\begin{equation}
\small
\label{eq:var_formula}
\mathrm{var}_{\mathcal{P}} (\hat{\bK}) =  \frac{1}{k^2} \int_{\mathcal{V}\times \mathcal{V}}  \lambda(\bx)  \lambda(\by) \bk(\bx, \theta)^T  \bk(\by, \theta) \left[ \frac{\varrho(\bx, \by)}{\lambda(\bx) \lambda(\by)} - 1 \right] d\bx d\by +\frac{1}{k^2} \int_{\mathcal{V}}{ \Vert \bk(\bx, \theta) \Vert^2 \lambda(\bx) d\bx}. 
\vspace{-5pt}
\end{equation}

\end{theorem}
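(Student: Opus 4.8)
The plan is to compute the total variance directly from its definition,
\[
\mathrm{var}_{\mathcal{P}}(\hat{\bK}) = \E_{\mathcal{P}}\big[\hat{\bK}^T \hat{\bK}\big] - \E_{\mathcal{P}}[\hat{\bK}]^T\,\E_{\mathcal{P}}[\hat{\bK}],
\]
where the essential tool is the pair of Campbell theorems that convert expectations of sums over the points of $\mathcal{P}$ into integrals against the product densities $\lambda$ and $\varrho$. Treating the mini-batch size $k=|B|$ as fixed so that the prefactor $1/k$ passes through the expectation, I would write $\hat{\bK} = \frac{1}{k}\sum_{i}\bk(\bx_i,\theta)$ over the points $\bx_i$ of $B$, and proceed in two parts: the second moment and the squared mean.

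First, for the second moment I would expand the inner product as an ordered double sum and split it into diagonal and off-diagonal parts,
\[
\hat{\bK}^T\hat{\bK} = \frac{1}{k^2}\sum_{i}\Vert\bk(\bx_i,\theta)\Vert^2 + \frac{1}{k^2}\sum_{i \neq j}\bk(\bx_i,\theta)^T\bk(\bx_j,\theta).
\]
The first-order Campbell theorem turns the diagonal sum into $\frac{1}{k^2}\int_{\mathcal{V}}\Vert\bk(\bx,\theta)\Vert^2\lambda(\bx)\,d\bx$, while the second-order Campbell theorem turns the sum over distinct pairs into $\frac{1}{k^2}\int_{\mathcal{V}\times\mathcal{V}}\bk(\bx,\theta)^T\bk(\by,\theta)\,\varrho(\bx,\by)\,d\bx\,d\by$. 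Separately, the first-order theorem gives $\E_{\mathcal{P}}[\hat{\bK}] = \frac{1}{k}\int_{\mathcal{V}}\bk(\bx,\theta)\lambda(\bx)\,d\bx$, so that the squared mean equals $\frac{1}{k^2}\int_{\mathcal{V}\times\mathcal{V}}\bk(\bx,\theta)^T\bk(\by,\theta)\,\lambda(\bx)\lambda(\by)\,d\bx\,d\by$.

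Subtracting the squared mean from the second moment, the two double integrals merge into a single one whose integrand is proportional to $\varrho(\bx,\by) - \lambda(\bx)\lambda(\by)$, while the diagonal intensity integral is left untouched. Factoring $\lambda(\bx)\lambda(\by)$ out of the bracket then produces exactly Eq.~\ref{eq:var_formula}.

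I expect the main obstacle to be conceptual rather than computational: one must carefully distinguish the off-diagonal sum over \emph{distinct} pairs, which is what the pair correlation $\varrho$ describes, from the full double sum. The diagonal $i=j$ contributions are precisely those \emph{not} captured by $\varrho$, and they are what generates the separate intensity integral; conflating the two is the natural way this derivation goes wrong. A secondary subtlety is the treatment of $k$: the clean $1/k^2$ prefactor relies on the cardinality being deterministic, and a fully rigorous statement for processes with random $|B|$ would require conditioning on $|B|=k$ or absorbing the normalization into the densities.
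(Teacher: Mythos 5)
Your proposal is correct and follows essentially the same route as the paper's own derivation in Appendix~\ref{app:derivation_variances}: expand the second moment into diagonal and off-diagonal sums, apply the first- and second-order Campbell theorems, subtract the squared mean, and merge the double integrals (the paper merely does this component-wise and sums over dimensions, which is equivalent to your direct vector computation). Your closing remarks on the distinct-pairs convention for $\varrho$ and on the treatment of $k$ as deterministic correctly identify the only subtleties, the latter of which the paper itself glosses over.
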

\begin{proof}
In Appendix~\ref{app:derivation_variances}
\end{proof}

\paragraph{Remark 1.} This formula applies to general point processes and hence sampling strategies for mini-batches. It proves that variance only depends on first and second order correlations captured by $\lambda(\bx)$ and $\varrho(\bx, \by)$, respectively. This provides a simple and convenient tool for analyzing properties of sampling strategies with respect to dataset characteristics for variance control, once only these lower order sampling characteristics are known or estimated by simulation.

\paragraph{Remark 2.} For standard SGD, we have $\varrho(\bx, \mathbf{y}) = \lambda(\bx) \lambda(\mathbf{y})$. This is due to the nature of random sampling, where sampling a point is independent of already sampled points. Note that this applies also to adaptive sampling with non-constant $\lambda(\bx)$. Hence, the term $[\frac{\varrho(\bx, \by)}{\lambda(\bx) \lambda(\by)} - 1 ]$ vanishes in SGD. In contrast, we show next that this term may induce a variance reduction for repulsive point processes.

\paragraph{Remark 3.} Repulsive point processes may make the first term in Eq.~\ref{eq:var_formula} negative, implying variance reduction. For repulsive point processes, the probability of sampling points that are close to each other is low. Consequently, if $\bx$ and $\by$ are close, $\varrho(\bx, \by) < \lambda(\bx)\lambda(\by)$, and the term $[\frac{\varrho(\bx, \by)}{\lambda(\bx) \lambda(\by)} - 1 ]$ is negative.  
This is due to points repelling each other (we will elaborate more on this in the next section). Furthermore, assuming that the loss function is sufficiently smooth in its data argument, the gradients are aligned for close points i.e $\bk(\bx, \theta)^T  \bk(\by, \theta) >0$. 
This combined implies that close points provide negative contributions to the first integral in Eq.~\ref{eq:var_formula}. The contributions of points farther apart average out and become negligible due to gradients not being correlated with $\varrho(\bx, \by)$, which is the case for all current sampling algorithms and the ones we propose in the next section. The negative first term in Eq.~\ref{eq:var_formula} leads to variance reduction, for repulsive point processes.

\paragraph{Implications.}
\label{sec:benifit}
This proposed theory allows us to use any point process for mini-batch sampling, such as DPP, finite Gibbs processes, Poisson disk sampling (PDS), and many more \cite{illian2008statistical}.
It thus offers many new directions of possible improvement. 
Foremost, we can choose point processes with a different degree of repulsion \cite{biscio2016quantifying}, and computational efficiency. 
Furthermore, in this general theory, we are able to adapt the density 
and alter the pair-wise interactions to encode our preference.
In the next section, we propose several practical algorithms utilizing these benefits.

\section{Poisson Disk Sampling for Active Mini-batch Sampling}
\label{sec:method}

We adapt efficient dart throwing algorithms for fast repulsive and adaptive mini-batch sampling (Section \ref{sec:VannilaPD}).
 
We further 
extend the algorithm with an adaptive disk size and 
density (Section \ref{sec:GenerlizedPD}). For supervised setups, we shrink the disc size towards the decision boundary, using mingling indices \cite{illian2008statistical}. This biases towards hard examples and improves classification accuracy. 

\subsection{Stationary Poisson Disk Sampling}
\label{sec:VannilaPD}

\begin{wrapfigure}{l}{0.4\textwidth}
  \centering
  \includegraphics[width=3.9cm]{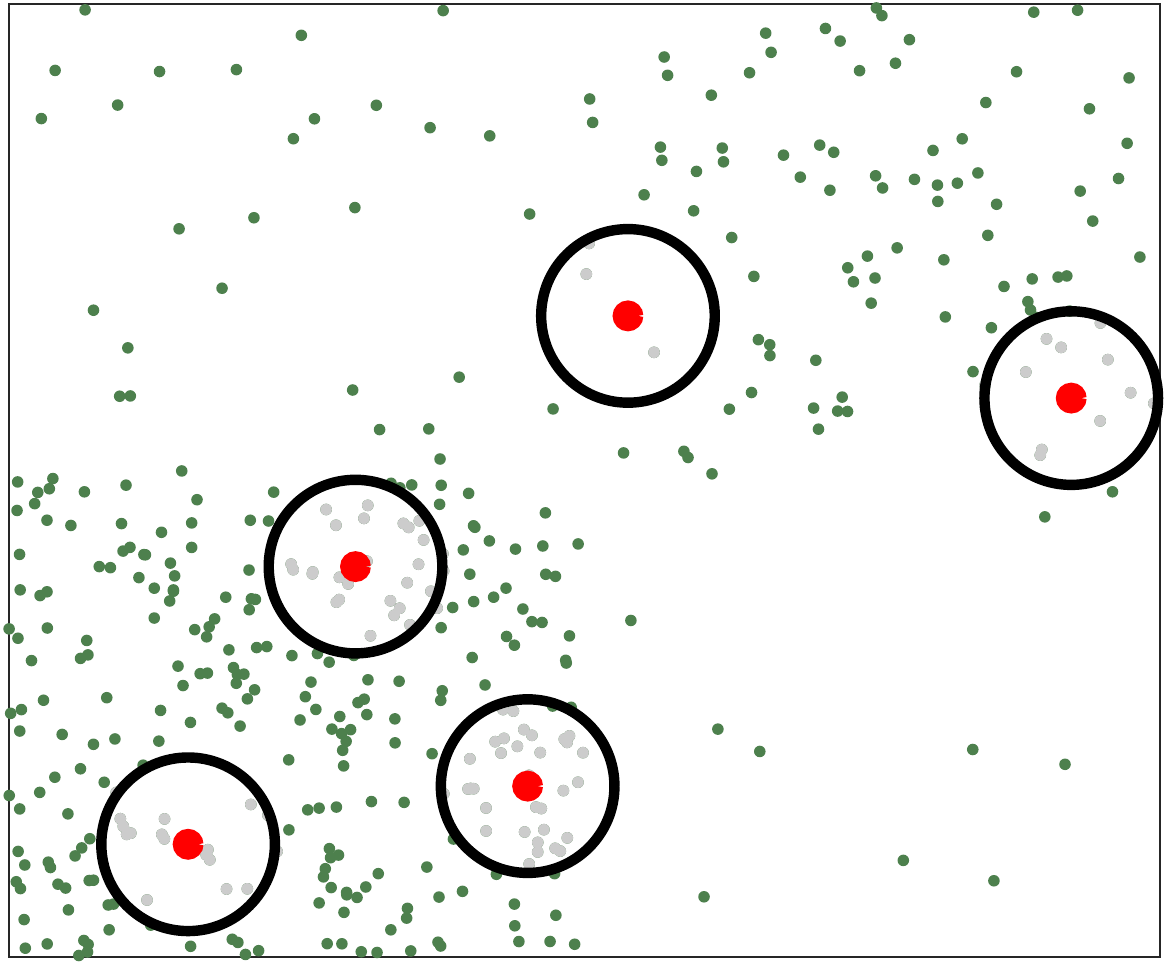}
  \caption{
  Demonstration of PDS.  
  The black circles of a certain radius $r$ mark regions claimed by the collected points (red). For the next iteration, if the newly sampled point falls in any of the circles (points colored in gray), this point will be rejected. 
  \vspace{-10pt}
  }
  \label{fig:demo_PD}
  \end{wrapfigure}

PDS is one type of repulsive point process. It demonstrates stronger local repulsion compared to DPP \cite{biscio2016quantifying}.  Typically, it is implemented with the efficient dart throwing algorithm \cite{lagae2008comparison}, and provides point arrangements similar to those of DPP, albeit much more efficiently. 

This process dictates that the smallest distance between each pair of sample points should be at least $r$ with respect to some distance measure $D$. 
The second order product density $\varrho (x, y)$ for PDS is zero when the distance between two points are smaller than the disk radius $||x-y||\le r$, and converges to  $\varrho (x, y) = \lambda(x)\lambda(y)$ when the two points are far \cite{Oztireli12Analysis}. Thus, $\left[ \frac{\varrho(\bx, \mathbf{y})}{\lambda(\bx) \lambda(\mathbf{y})} - 1 \right] =-1 < 0$ when the points are within distance $r$, and $0$ when they are far. 

As demonstrated in Figure \ref{fig:demo_PD}, the basic dart throwing algorithm for PDS works as follows in each iteration: 1) randomly sample a data point; 2) if it is within a distance $r$ of any already accepted data point, reject; otherwise, accept the new point. 
We can also specify the maximum sampling size $k$. This means that we terminate the algorithm when $k$ points have been accepted.  
The computational complexity of PDS with dart throwing\footnote{In practice, the number of accepted points can be smaller than the number of considered points in the dataset, as some of them will be eliminated by not satisfying the distance criteria.} is $\mathcal{O}(k^2)$. This is much lower than the complexity $\mathcal{O}(N k^3)$ for k-DPP, where $N$ is the number of data points in the dataset.  
In the rest of the paper, we refer to this version of PDS as \textbf{``Vanilla PDS''}. 

\subsection{Poisson Disk Sampling with Adaptive Density}
\label{sec:GenerlizedPD}
To further utilize the potential benefit of our framework, we propose several variations of PDS. In particular, we use mingling index based marked processes. 
We then propose three variants as explained below: Easy PDS, where only the points far from decision boundaries repulse each other, as well as Dense PDS and Anneal PDS, where we can impose preferences on point densities. 

\paragraph{Mingling Index} 
The mingling index $\bar{M}_K(x_i)$ is defined as \cite{illian2008statistical}:
\begin{equation}
\bar{M}_K(x_i) = \frac{1}{K} \sum _{j=1}^K \mathbf{1} \big( m(x_i) \neq m(x_j) \big),
\end{equation}
where $m(x_i)$ indicates the mark of the point $x_i$. In case of a classification task, the mark is the class label. $\bar{M}_{K}(x_i)$  is the ratio of points with different marks than $x_i$ among its $K$ nearest neighbors. 

Depending on the mingling index, there are three different situations.
Firstly, if $\bar{M}_K(x_i)=0$, the region around $x_i$ only includes points from the same class. This makes $x_i$ a relatively easy point to classify. This type of points is preferred to be sampled in the early iterations for self-paced learning.
Secondly, if $\bar{M}_K(x_i) > 0$, this point may be close to a decision boundary. For variance reduction, we do not need to repulse this type of points. Additionally, sampling this type of points more often may help the model to refine the decision boundaries. 
Finally, if $\bar{M}_K(x_i)$ is very high, the point is more likely to be a hard negative. In this case, the point is mostly surrounded by points from other classes.
On a side note, points with high mingling indices can be interpreted as support vectors \cite{cortes1995support}.

\paragraph{Adaptive Variants of Poisson Disk Sampling~}
Gradients may change drastically when points are close to decision boundaries. Points in this region, thus, violate the assumption in Remark 3. Because of this, our first simple extension, which we call \textbf{``Easy PDS''}, sets the disk radius to $r_0$ when the point has a mingling index $M_{x_i} = 0$, and to $0$ if $M_{x_i} > 0$. This means that only easy points (with $M_{x_i} = 0$) repulse. On average, ``Easy PDS'' is expected to sample more of the difficult points compared with ``Vanilla PDS''. 

\begin{algorithm}[t]
   \caption{Draw throwing for Dense PDS}
   \label{alg:DT_hardexamples}
\begin{algorithmic}
   \STATE {\bfseries Input:} data $\bm{x}$, mini-batch size $S$,  mingling index  $M$ for all data,  the parameter for categorical distribution to sample mingling index $\pi$, disk radius $r_0$
   \REPEAT
\STATE  sample a mingling index $m \sim \text{Cat}(\pi)$  
\STATE  randomly sample a point $i$ with mingling index $m$
   \IF{ $x_i$ is not in disk of any samples} 
   \STATE insert $x_i$ to $B$
   \ENDIF
   \UNTIL{$S$ points have been accepted for the mini-batch}
\end{algorithmic}
\end{algorithm}
\setlength{\textfloatsep}{5pt}

For many tasks, when the data is highly structured, there are only few data points that are close to the decision boundary. To refine the decision boundary, inspired by hard example mining, we can sample points with a high mingling index more often. 
We thus propose the \textbf{``Dense PDS''} method summarized in Algorithm~\ref{alg:DT_hardexamples}. Instead of drawing darts randomly, we draw darts based on different mingling indices. The mingling indices can assume $K+1$ values, where $K$ is the number of nearest neighbors.  We thus can specify a parameter $\pi$ for a categorical distribution to sample mingling indices first. Then we randomly sample a dart which has the given mingling index. In this way, we can encode our preferred density with respect to the mingling index. 

It is straightforward to introduce an annealing mechanism in ``Dense PDS'' by using a different $\pi_n$ at each iteration $n$. Inspired by self-paced learning, we can give higher density to points with low mingling index in the early iterations, and slowly increase the density of points with high mingling index. 
We refer this method as \textbf{``Anneal PDS''}.

Note that all our proposed sampling methods only rely on the properties of the given data instead of any model parameters. Thus, they can be easily used as a pre-processing step, prior to the training procedure of the specific model. 
\section{Experiments}
\label{sec:exp}
We evaluate the performance of the proposed method in various application scenarios. Our methods show clear improvements compared to baseline methods in each case. 
We first demonstrate the behavior of different varieties of our method using a synthetic dataset. Secondly, we compare Vanilla PDS with DPP as in \cite{Zhang17Stochastic} on the Oxford flower classification task. Finally, we evaluate our proposed methods with two different tasks with very different properties: image classification with MNIST, and speech command recognition.

\begin{figure*}[t]
\centering
\subfigure[Random]{
\includegraphics[width = 0.23\textwidth,frame]{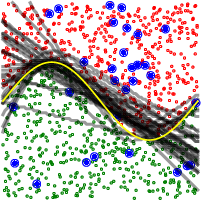}
\label{fig:toy_performance_30:random}
}
\subfigure[Vanilla PDS]{
\includegraphics[width = 0.23\textwidth,frame]{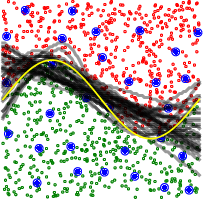}
\label{fig:toy_performance_30:vanilla}
}
\subfigure[Easy PDS]{
\includegraphics[width = 0.23\textwidth,frame]{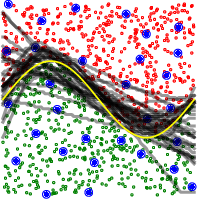}
\label{fig:toy_performance_30:easy}
}
\subfigure[Dense PDS]{
\includegraphics[width = 0.23\textwidth,frame]
{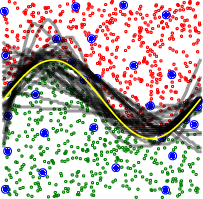}
\label{fig:toy_performance_30:dense}
}
\caption{Comparison of performance using one mini-batch sample distribution on synthetic data. Each experiment is repeated 30 times. The resulting decision boundaries are drawn as transparent black lines and the ground truth decision boundary is shown in yellow. As an example, the points from one of the sampled mini-batches are shown in blue.
}
\label{fig:toy_performance_30}
\end{figure*}

\begin{figure}[t]
\RawFloats
\centering
\begin{minipage}[c]{0.58\textwidth}
\centering
\subfigure[k=80]{
\includegraphics[width=3.8cm]{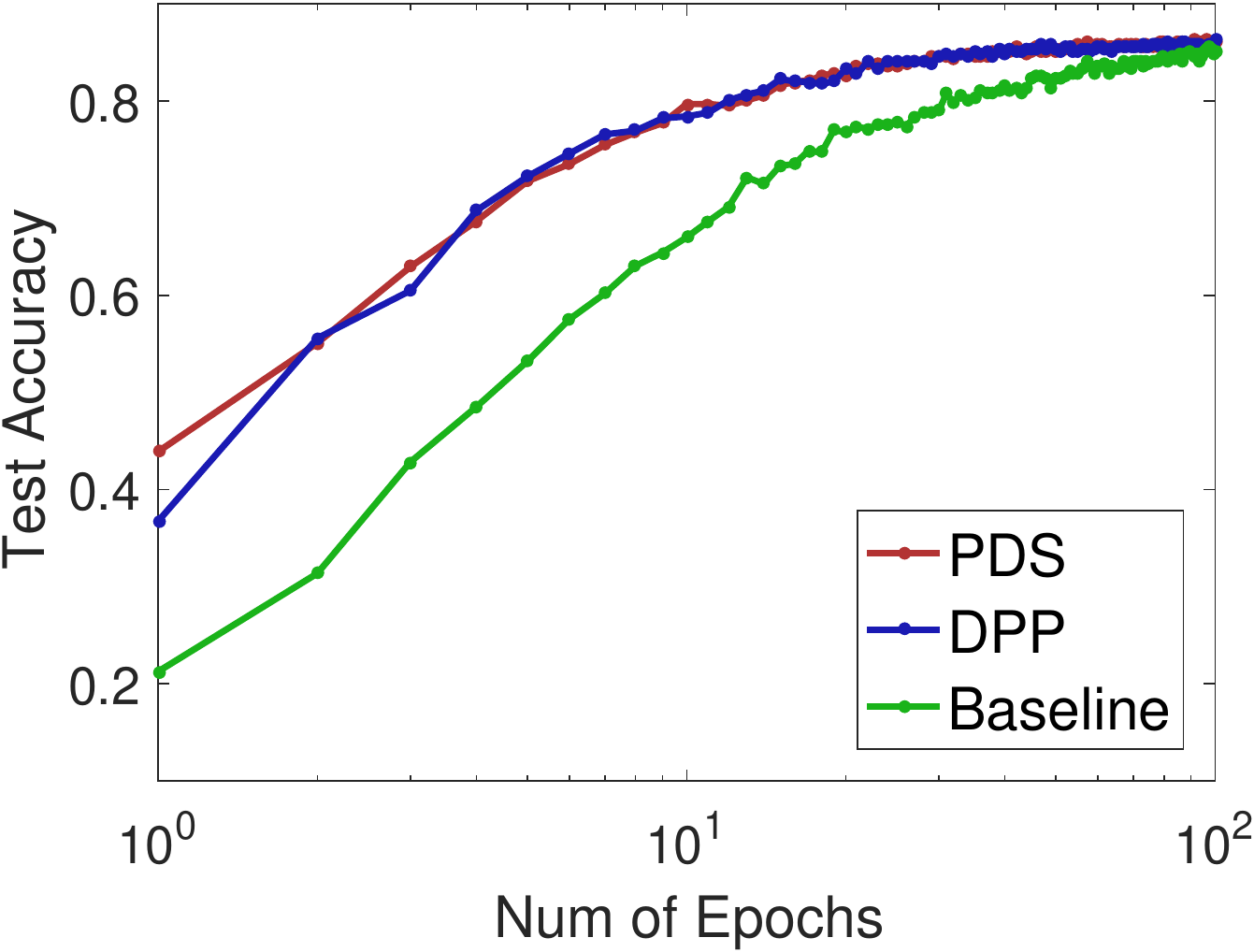}
}
\subfigure[k=150]{
\includegraphics[width=3.8cm]{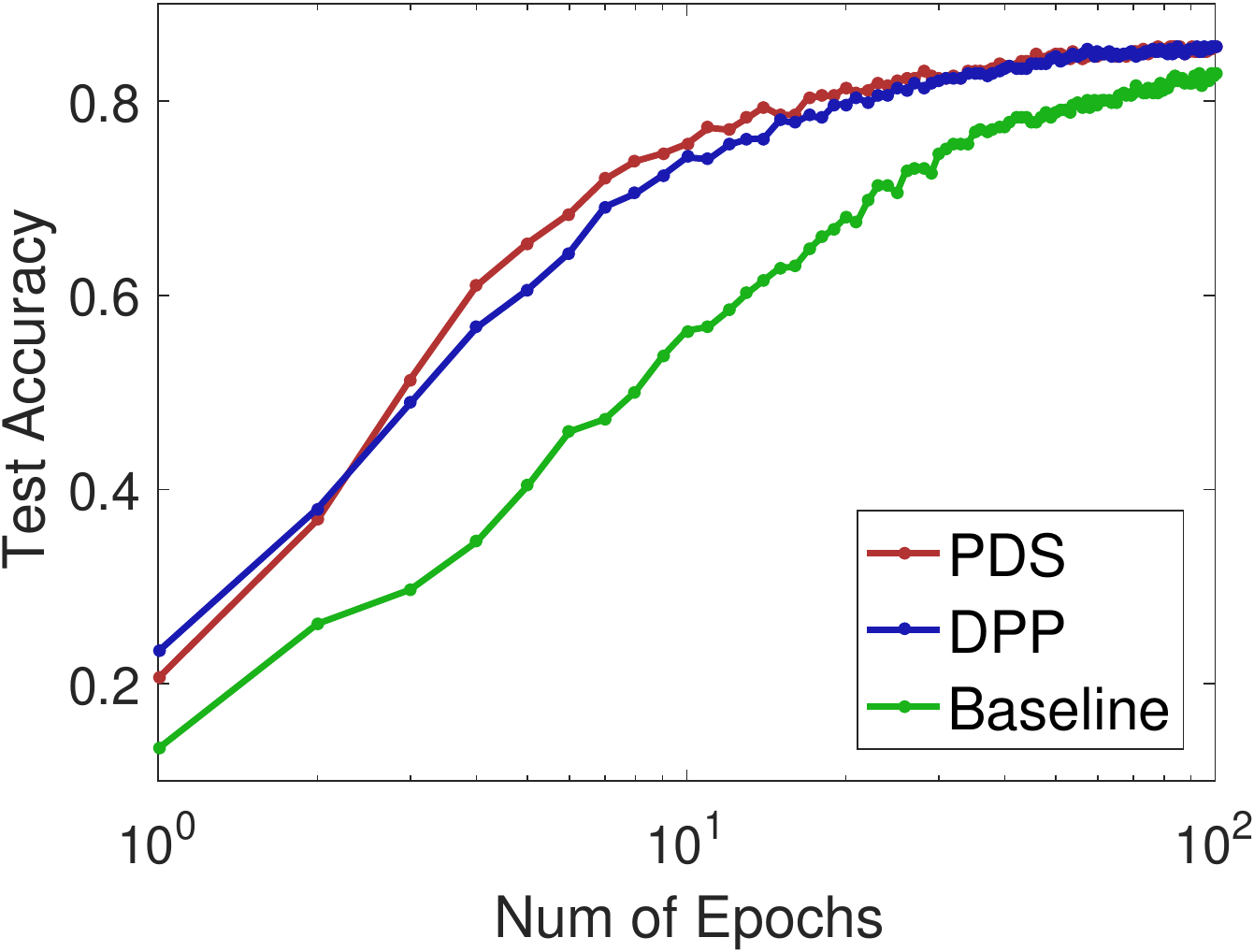}
}
\caption{ Oxford Flower Classification with Softmax. 
PDS has similar performance as DPP for sampling mini-batches. However PDS is more efficient as Table \ref{tab:sampletime} shows. Both methods converges faster than traditional SGD (Baseline).}
\label{fig:testAcc}
\end{minipage}
\hspace{0.015\textwidth}
\begin{minipage}[c]{0.38\textwidth}
\centering
\begin{tabular}{c c c}
\hline
k  & 80  & 150 \\
\hline
k-DPP & 29.687& 189.0303\\
Fast k-DPP  & 0.3312  & 1.8745 \\
PDS  &  0.0795  &0.1657\\
\hline
\end{tabular}
\captionof{table}{
CPU time (sec) to sample one mini-batch. The disk radius $r$ is set to half of the median value of the distance measure $D$ for PDS. The table confirms that the computational complexity of PDS is $\mathcal{O}(k^2)$, which does not depend on the dataset size $N$, as opposed to $\mathcal{O}(Nk^3)$ for k-DPP. Thus, PDS is able to scale to massive datasets with large mini-batches. More results can be found in the appendix.}
\label{tab:sampletime}
\end{minipage}
\end{figure}

\paragraph{Synthetic Data}
\label{sec:synthetic}
We evaluate our methods on two-dimensional synthetic datasets to illustrate the behavior of different sampling strategies. Figure~\ref{fig:toy_performance_30} shows two classes (green and red dots) separated by a wave-shaped (sine curve) decision boundary (yellow line). Intuitively, it should be favorable to sample diverse subsets and even more beneficial to give more weight to the data points at the decision boundary, i.e., sampling them more often. 
We sample one mini-batch with batch size $30$ using different sampling methods. 
For each method, we train a neural network classifier with one hidden layer of five units, using a single mini-batch. This model, albeit simple, is sufficient to handle the non-linear decision boundary in the example.

Figure \ref{fig:toy_performance_30} shows the decision boundaries by repeating the experiment 30 times. In order to illustrate the sampling schemes,  we also show one example of sampled mini-batch using blue dots.  
In the random sampling case (Figure~\ref{fig:toy_performance_30:random}), we can see that the mini-batch is not a good representation of the original dataset as some random regions are more densely or sparsely sampled. Consequently, the learned decision boundary is very different from the ground truth. 
Figures~\ref{fig:toy_performance_30:vanilla} shows Vanilla PDS.
Because of the repulsive effect, the sampled points cover the data space more uniformly and the decision boundary is improved compared to Figure~\ref{fig:toy_performance_30:random}.
In Figure~\ref{fig:toy_performance_30:easy}, we used Easy PDS, where the disk radius adapts with the mingling index of the points. 
We can see that points close to the decision boundary do not repel each other. This leads to a potentially more refined decision boundary as compared to Figure~\ref{fig:toy_performance_30:vanilla}.  
Finally, Dense PDS, shown in Figure~\ref{fig:toy_performance_30:dense}, chooses more samples close to the boundary  
and leads to the most precise decision boundary with a single mini-batch.

\paragraph{Oxford Flower Classification}
\label{sec:oxford}

We compare our proposed PDS with DPP for mini-batch sampling on the fine-grained classification task as in \cite{Zhang17Stochastic} with the same experimental setting.  We use Vanilla PDS (with fixed disk radius) for fair comparison with DPP.
Figure~\ref{fig:testAcc} shows the test accuracy at the end of each training epoch. We see that sampling with either DPP or PDS leads to accelerated convergence as compared to traditional SGD. With similar performance as using DPP, PDS demonstrates significant improvement on sampling efficiency as shown in Table \ref{tab:sampletime}. More experimental results with different parameter settings are presented in the appendix.

\paragraph{MNIST}
\label{sec:MNIST}

\begin{figure*}[t]
\centering
\hspace{-15pt}
\hspace{-15pt}
\subfigure[Mean Performance Comparison]{
\includegraphics[width = 0.47\textwidth]{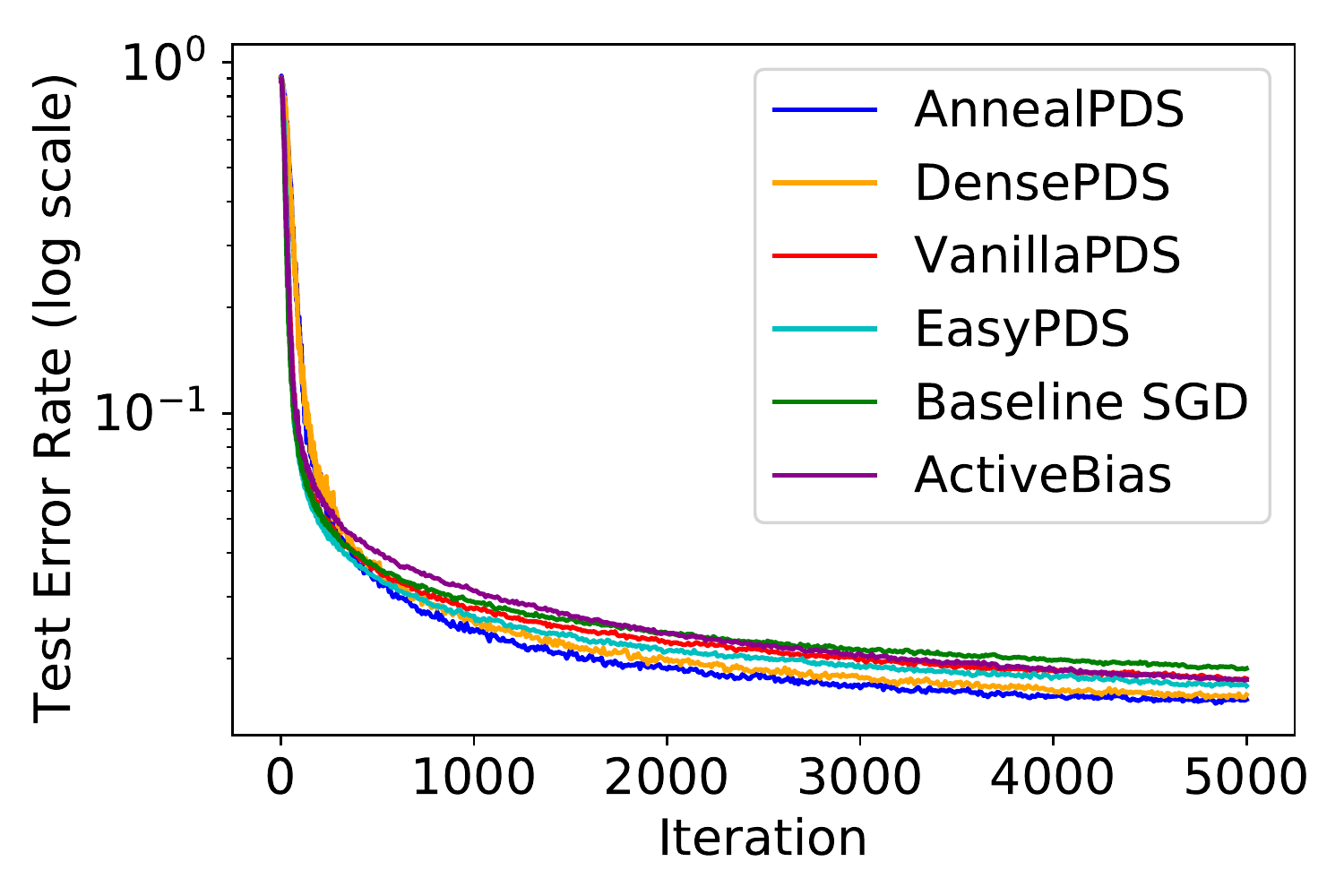}
\label{fig:MNIST:performance}
}
\hspace{-5pt}
\subfigure[Detailed Comparison]{
\includegraphics[width = 0.47\textwidth]{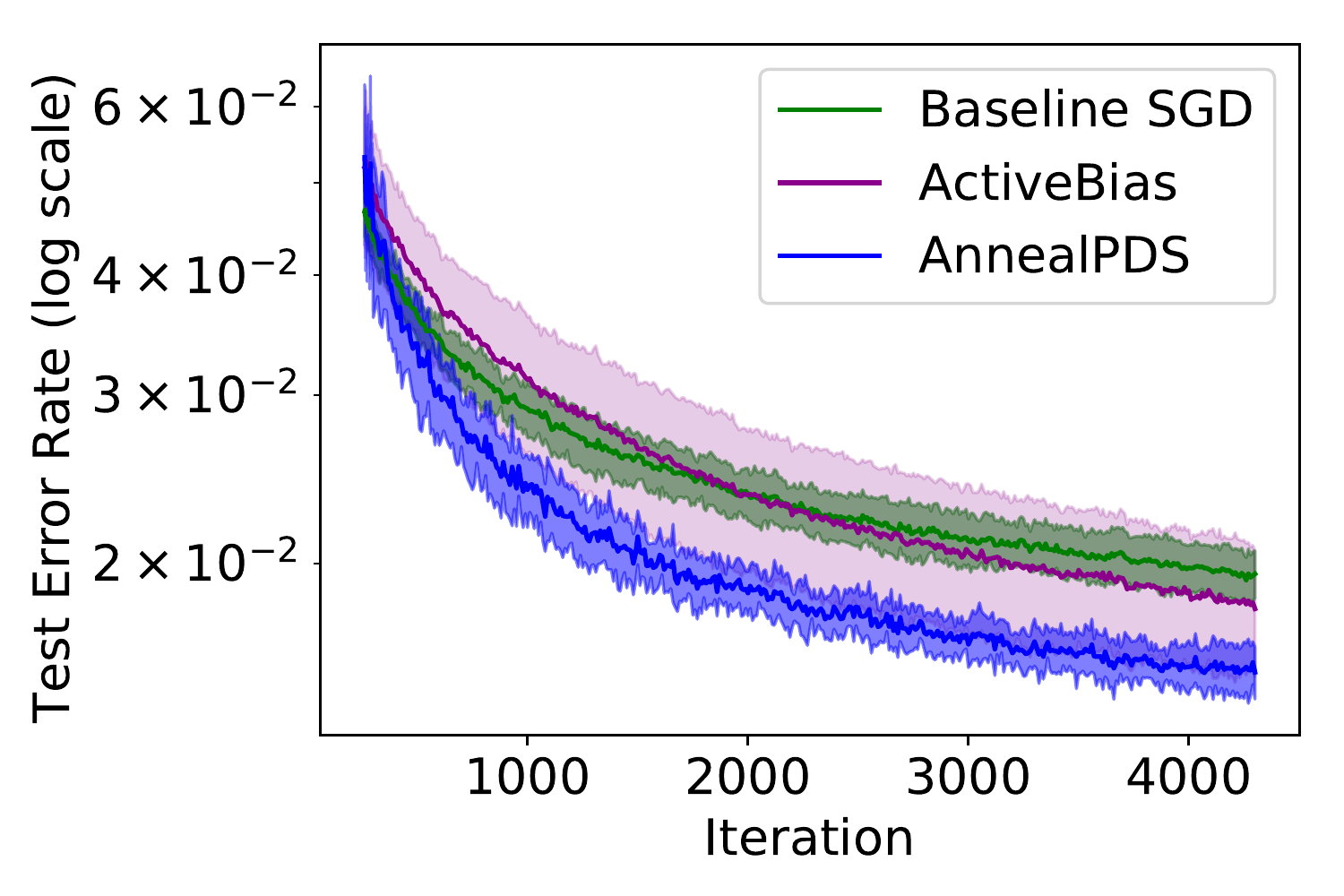}
\label{fig:MNIST:ThreeCompare}
}
\caption{
MNIST experiment (10 repetitions). The mean performance for each method is reported in Panel (a). We compared all variations of our proposed methods with two baselines: Baseline SGD and ActiveBias \cite{chang2017active}. All our methods perform better than the baselines. AnnealPDS performs best. For better visualization, Panel (b) shows the mean and standard deviation of our proposed Anneal PDS comparing with two baselines closely.
}
\label{fig:MNIST}
\end{figure*}
We further show results for hand-written digit classification on the MNIST dataset \cite{lecun1998gradient}. We compare different variations of our method with two baselines: the traditional SGD and ActiveBias \cite{chang2017active}. We use half of the training data and the full test data.
As detailed in the appendix, with MNIST, data are well clustered and most data points have mingling index $0$ (easy to classify). 
A standard multi-layer convolutional neural network (CNN) from Tensorflow\footnote{ \url{https://www.tensorflow.org/versions/r0.12/tutorials/mnist/pros/}} is used in this experiment with standard experimental settings (details in appendix). 
 
Figure~\ref{fig:MNIST:performance} shows the test error rate evaluated after each SGD training iteration for different mini-batch sampling methods. All active sampling methods with PDS lead to improved final performance compared to traditional SGD.
Vanilla PDS clearly outperforms the baseline method. Easy PDS, performs very similarly to Vanilla PDS with slightly faster convergence in the early iterations. 
Dense PDS leads to better final performance at the cost of a slight decrease in initial convergence speed. The decision boundary is refined because we prefer non-trivial points during training.
Anneal PDS further improves Dense PDS with accelerated convergence in the early iterations. Figure \ref{fig:MNIST:ThreeCompare} shows the performance of test accuracy for Anneal PDS compared with baseline methods in a zoomed view.

We thus conclude that all different variations of PDS obtain better final performance, or conversely, achieve the baseline performance with fewer iterations.
With a proper annealing schedule to resemble self-paced learning, we can obtain even more improvement in the final performance.

\paragraph{Speech Command Recognition}
\label{sec:speech}

In this section, we evaluate our method on a speech command classification task 
as described in \cite{Sainath_2015_Interspeech}. 
The classification task includes twelve classes: ten isolated command words
, silence, or unknown class.

\begin{wrapfigure}{l}{0.4\textwidth}
\centering
\includegraphics[width = 5.5cm, height = 4.5cm]{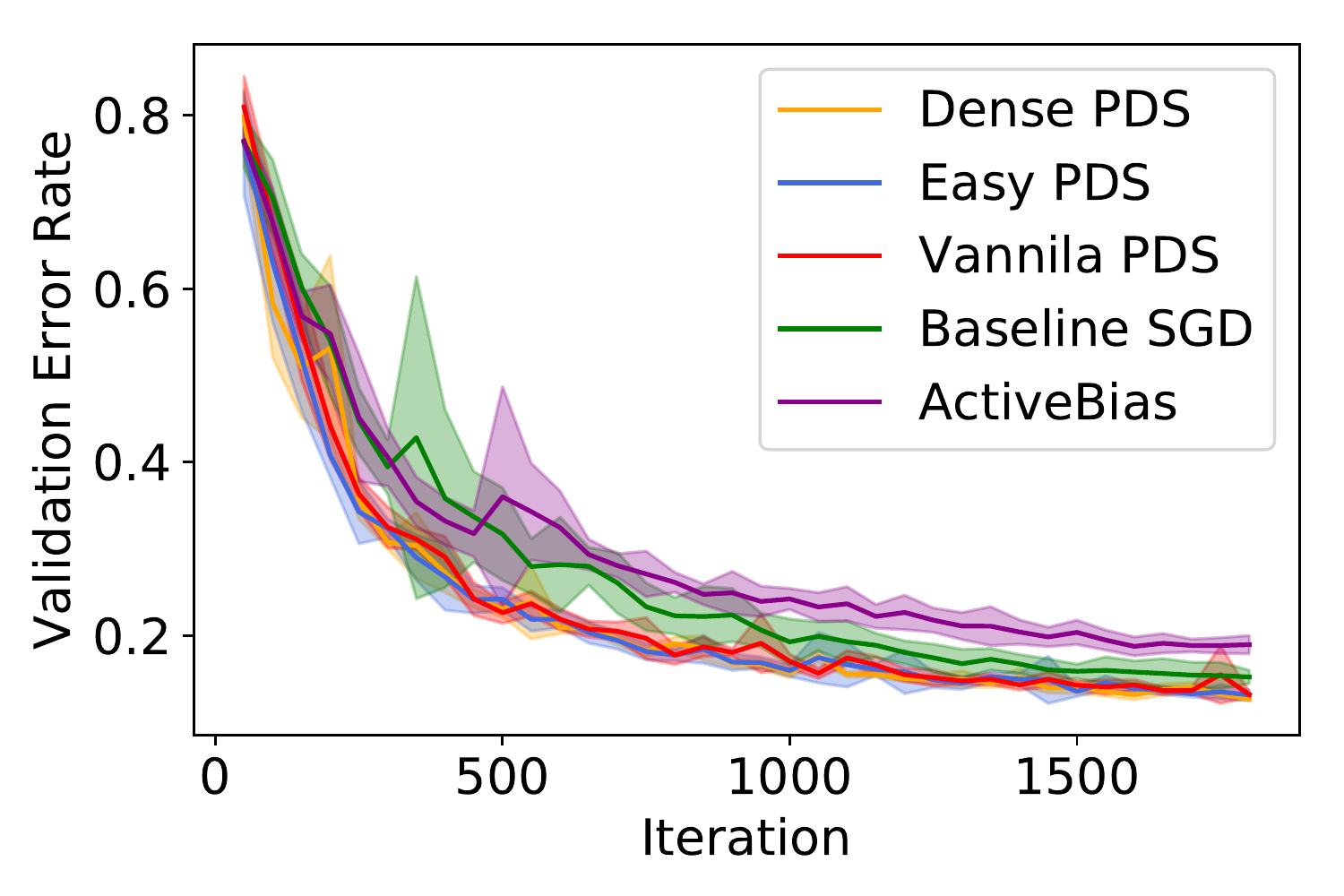}
\caption{Speech experiment (10 repetitions). We compare the performance over the validation set of different variations of PDS with traditional SGD for every 50 iterations. Means and standard deviations are shown.
\vspace{-5pt}
}
\label{fig:Speech:performance}
\end{wrapfigure}

The database consists of 64,727 one-second-long audio recordings.
As in \cite{Sainath_2015_Interspeech},  for each recording, 40 MFCC features \cite{DavisAndMarmelstein1980MFCC} are extracted at 10 msec time intervals resulting in $40\times 98$  features. 
We use the TensorFlow  implementation\footnote{\url{https://www.tensorflow.org/tutorials/audio_recognition}} with standard settings (see appendix for details).
Differently from the MNIST dataset, word classes are not clearly separated and data with different mingling index values are well distributed (see appendix).

Figure~\ref{fig:Speech:performance} shows the accuracy on the validation set evaluated every 50 training iterations. Using Vanilla PDS, the model converges with fewer iterations compared to the traditional random sampling of mini-batches. Easy PDS and Dense PDS show similar improvement since few data have $0$ mingling indices in this dataset.

As compared to the MNIST experiment, the gain of Vanilla PDS and Easy PDS is larger in this case since the dataset is more challenging. 
On the other hand, encouraging more difficult samples has a stronger impact on the MNIST dataset than in the Speech experiment. In all different settings, our mini-batch sampling methods are beneficial for both fast convergence and final model performance.  
\section{Discussion}
\label{sec:discussion}
In this work, we propose the use of repulsive point processes for active mini-batch sampling.  
We provide both theoretical and experimental evidence that using repulsive stochastic point processes can reduce the variance of stochastic gradient estimates, which leads to faster convergence. Additionally, our general framework also allows adaptive density and adaptive pair-wise interactions. This leads to further improvements in model performance thanks to balancing the information provided by the input samples, or enhancing the information around decision boundaries.

Our work is mainly focused on similarity measures in input space, which makes the algorithms efficient without additional run-time costs for learning. In future work, we will explore the use of our framework in the gradient space directly. This can potentially lead to even greater variance reduction, but at the same time may introduce higher computational complexity. Additionally, 
we believe that our proposed method may show even greater advantages in the two-stage framework such as Faster-R-CNN \cite{ren2015faster},
where the second stage network is trained using a subset of region proposals from the first stage. 

Finally, for sampling with adaptive density, we mainly use the information from mingling index, which can only be utilized for classification problems. In future work, we would also like to explore other measures such as sequences of annotations or graphs.

\bibliography{ref}
\bibliographystyle{plain}

\clearpage
\appendix
\onecolumn
\section{Derivation details}
\label{app:derivation}
\subsection{Derivation detail of variances of estimated Gradients}
\label{app:derivation_variances}
\paragraph{Preliminary: Campbell's Theorem}~
A fundamental operation needed for analyzing point processes is computing expected values of sums of sampled values from functions $\mathbb{E}_{\mathcal{P}} \left[ \sum{ f(\mathbf{x}_i) } \right]$, where the expectation is computed over different realizations of the point process $\mathcal{P}$, and $\mathbf{x}_1, \cdots$ are the points in a given realization. Note that the number of points in a given realization is also random, and hence is often omitted for the sums. This can also be generalized to functions of more than one variable. Campbell's theorem is a fundamental theorem that relates these sums to integrals of arbitrary functions $f$, and the product densities of $\mathcal{P}$. In particular, we will work with sums of the form $\sum{f(\mathbf{x}_i)}$ and $\sum_{ij}{f(\mathbf{x}_i, \mathbf{x}_j)}$. These are given by the following expressions:
\begin{equation}
\mathbb{E}_{\mathcal{P}} \left[ \sum{f(\mathbf{x}_i)} \right] = \int_{\mathbb{R}^d}{f(\mathbf{x}) \lambda(\mathbf{x}) d\mathbf{x}},
\label{eq:campbellOneVariable}
\end{equation}
 where  $f:\mathbb{R}^d \rightarrow \mathbb{R}$, and 
\begin{equation}
\mathbb{E}_{\mathcal{P}} \left[ \sum_{i \neq j}{f(\mathbf{x}_i, \mathbf{x}_j)} \right] = \int_{\mathbb{R}^d \times \mathbb{R}^d}{f(\mathbf{x}, \mathbf{y}) \varrho(\mathbf{x}, \mathbf{y}) d\mathbf{x} d\mathbf{y}},
\label{eq:campbellTwoVariables}
\end{equation}
for $f:\mathbb{R}^d \times \mathbb{R}^d \rightarrow \mathbb{R}$, and under the common assumption~\cite{illian2008statistical} that no two points in a process can be at the same location almost surely. In practice, we typically observe sample points in a finite domain $\mathcal{V} \in \mathbb{R}^d$. For such cases, the integration domains can be replaced by $\mathcal{V}$.

\paragraph{Derivation of Gradient Variances}~
For our derivations of bias and variance in this paper, we will need the first $\varrho^{(1)}$ and second $\varrho^{(2)}$ order product densities. The first order product density is given by $\varrho^{(1)}(\mathbf{x}) d\mathbf{x} = p(\mathbf{x})$. It can be shown that the expected number of points of $\mathcal{P}$ in a set $\mathcal{B}$ can be written as the integral of this expression: $\mathbb{E}_{\mathcal{P}} \left[ N(\mathcal{B}) \right] = \int_\mathcal{B}{\varrho^{(1)}(\mathbf{x}) d\mathbf{x}}$, where $N(\mathcal{B})$ is the (random) number of points of the process $\mathcal{P}$ in set $\mathcal{B}$. Hence, $\varrho^{(1)}(\mathbf{x})$ measures the local expected density of distributions generated by a point process. It is thus usually called the \emph{intensity} of $\mathcal{P}$ and denoted by $\lambda(\mathbf{x}) = \varrho^{(1)}(\mathbf{x})$. Pairwise correlations are captured by the second order product density $\varrho^{(2)}(\mathbf{x},\mathbf{y}) d\mathbf{x} d\mathbf{y} = p(\mathbf{x},\mathbf{y})$. The two statistics $\lambda(\mathbf{x})$ and $\varrho (\mathbf{x},\mathbf{y})$ are sufficient to exactly express bias and variance of estimators for integrals or expected values.

The expected value of the stochastic gradient $\mathbb{E}_{\mathcal{P}} \left[ \hat{\mathbf{G}}(\theta) \right]$, can be computed as follows
\begin{equation}
\mathbb{E}_{\mathcal{P}} \left[ \hat{\mathbf{G}}(\theta) \right] = \mathbb{E}_{\mathcal{P}} \left[ \frac{1}{k} \sum_i  \mathbf{g}(\mathbf{x}_i, \theta) \right] = \int_{\mathcal{V}}{\frac{1}{k} \mathbf{g}(\mathbf{x}, \theta) \lambda(\mathbf{x}) d\mathbf{x}}.
\label{eq:sgdExpectedValue}
\end{equation}

The $\lambda(\mathbf{x})$ is fundamentally related to the assumed underlying distribution $p_{\mathrm{data'}}(\mathbf{x})$ for the observed data-points. If the sampler does not introduce additional adaptivity, e.g. random sampling, then $\lambda(\mathbf{x})$ is directly proportional to $p_{\mathrm{data'}}(\mathbf{x})$.

The variance of a multi-dimensional random variable can be written as $\mathrm{var}_{\mathcal{P}}(\hat{\mathbf{G}}) = \mathrm{tr}[\mathrm{cov}_{\mathcal{P}}(\hat{\mathbf{G}})] = \sum_{m} \mathrm{var}_{\mathcal{P}}(\hat{\mathbf{G}}_m)$, where $\hat{\mathbf{G}}_m$ denotes the $m^{th}$ component of $\hat{\mathbf{G}}$. The variance for each dimension is given by $\mathrm{var}_{\mathcal{P}}(\hat{\mathbf{G}}_m) = \mathbb{E}_{\mathcal{P}} [\hat{\mathbf{G}}_m^2] - ( \mathbb{E}[\hat{\mathbf{G}}_m] )^2$. These terms can be written in terms of $\lambda$ and $\varrho$ by utilizing Equations~\ref{eq:campbellTwoVariables}, and~\ref{eq:campbellOneVariable}, respectively, as follows

\begin{equation}
\begin{aligned}
\mathbb{E}_{\mathcal{P}} \left[\hat{\mathbf{G}}_m^2 \right] &= \mathbb{E}_{\mathcal{P}} \left[ \frac{1}{k^2}\sum_{ij}  \mathbf{g}_m(\mathbf{x}_i, \theta) \mathbf{g}_m(\mathbf{x}_j, \theta) \right] \\
&= \mathbb{E}_{\mathcal{P}} \left[\frac{1}{k^2} \sum_{i \neq j} \mathbf{g}_m(\mathbf{x}_i, \theta) \mathbf{g}_m(\mathbf{x}_j, \theta) \right]  
+ \mathbb{E}_{\mathcal{P}} \left[\frac{1}{k^2} \sum_{i} \mathbf{g}^2_m(\mathbf{x}_i, \theta) \right] \\
&= \frac{1}{k^2}\int_{\mathcal{V}\times \mathcal{V}}{  \mathbf{g}_m(\mathbf{x}, \theta)  \mathbf{g}_m(\mathbf{y}, \theta) \varrho(\mathbf{x}, \mathbf{y}) d\mathbf{x} d\mathbf{y}} \\
&+ \frac{1}{k^2}\int_{\mathcal{V}}{ \mathbf{g}^2_m(\mathbf{x}, \theta) \lambda(\mathbf{x}) d\mathbf{x}},
\label{eq:sgdVarianceTerm1}
\end{aligned}
\end{equation}
\begin{equation}
\begin{aligned}
\left( \mathbb{E}_{\mathcal{P}} \left[\hat{\mathbf{G}}_m \right] \right)^2 &= \left(\frac{1}{k} \int_{\mathcal{V}}{\mathbf{g}_m(\mathbf{x}, \theta) \lambda(\mathbf{x}) d\mathbf{x}} \right)^2 \\
&= \frac{1}{k^2} \int_{\mathcal{V} \times \mathcal{V}} { \mathbf{g}_m(\mathbf{x}, \theta) \mathbf{g}_m(\mathbf{y}, \theta) \lambda(\mathbf{x}) \lambda(\mathbf{y}) d\mathbf{x} d\mathbf{y}}
\label{eq:sgdVarianceTerm2}
\end{aligned}
\end{equation}
Finally, summing over all dimensions, we can get the following formula for the variance of $\hat{\mathbf{G}}$
\begin{equation}
\begin{aligned}
\mathrm{var}_{\mathcal{P}} (\hat{\mathbf{G}}) =&\frac{1}{k^2}\int_{\mathcal{V}\times \mathcal{V}}{  \lambda(\mathbf{x})  \lambda(\mathbf{y}) \mathbf{g}(\mathbf{x}, \theta)^T  \mathbf{g}(\mathbf{y}, \theta) \left[ \frac{\varrho(\mathbf{x}, \mathbf{y})}{\lambda(\mathbf{x}) \lambda(\mathbf{y})} - 1 \right] d\mathbf{x} d\mathbf{y}} \\
&+ \frac{1}{k^2} \int_{\mathcal{V}}{ \Vert \mathbf{g}(\mathbf{x}, \theta) \Vert^2 \lambda(\mathbf{x}) d\mathbf{x}}.
\label{eq:sgdVariance_long}
\end{aligned}
\end{equation}

\section{Additional Results}
\label{app:examples}
Figure \ref{fig:sample_exp} shows examples of samples using DPP and using Poisson disk sampling (by dart throwing). 
\begin{figure}[h]
\centering
\subfigure[DPP exp 1]{
\includegraphics[width=3.4cm]{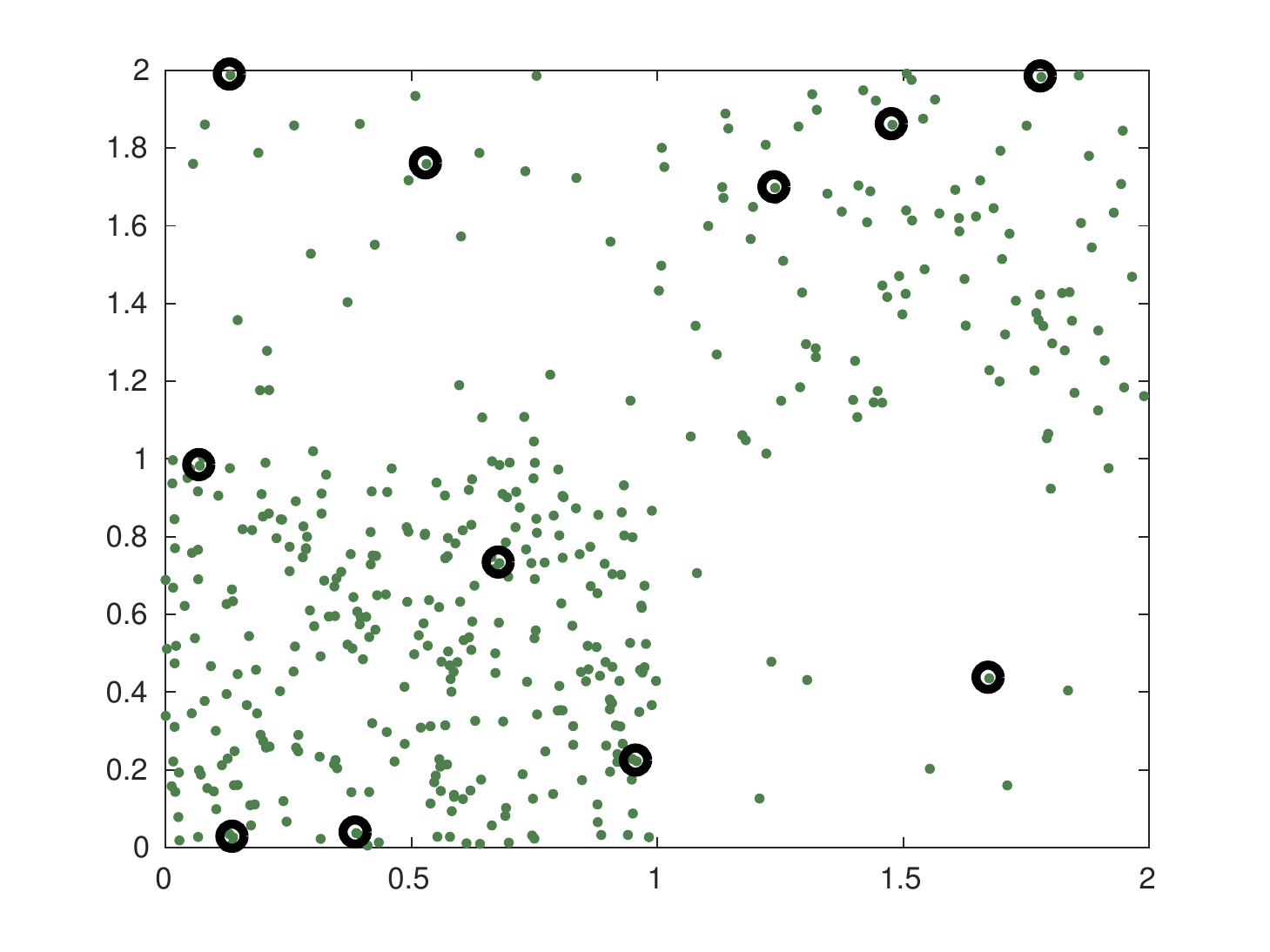}
}
\hspace{-10pt}
\subfigure[DPP exp 2]{
\includegraphics[width=3.4cm]{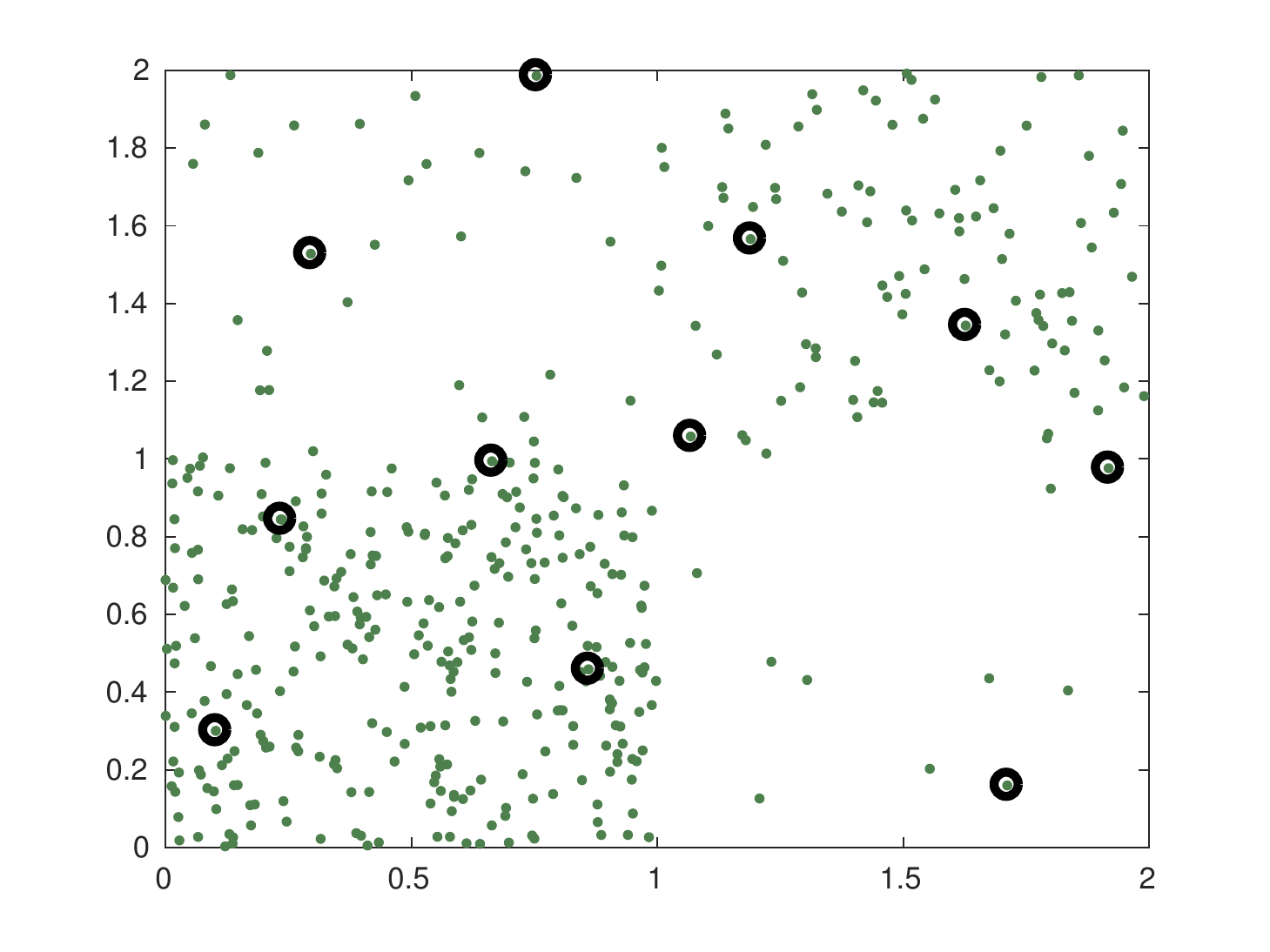}
}
\hspace{-10pt}
\subfigure[DPP exp 3]{
\includegraphics[width=3.4cm]{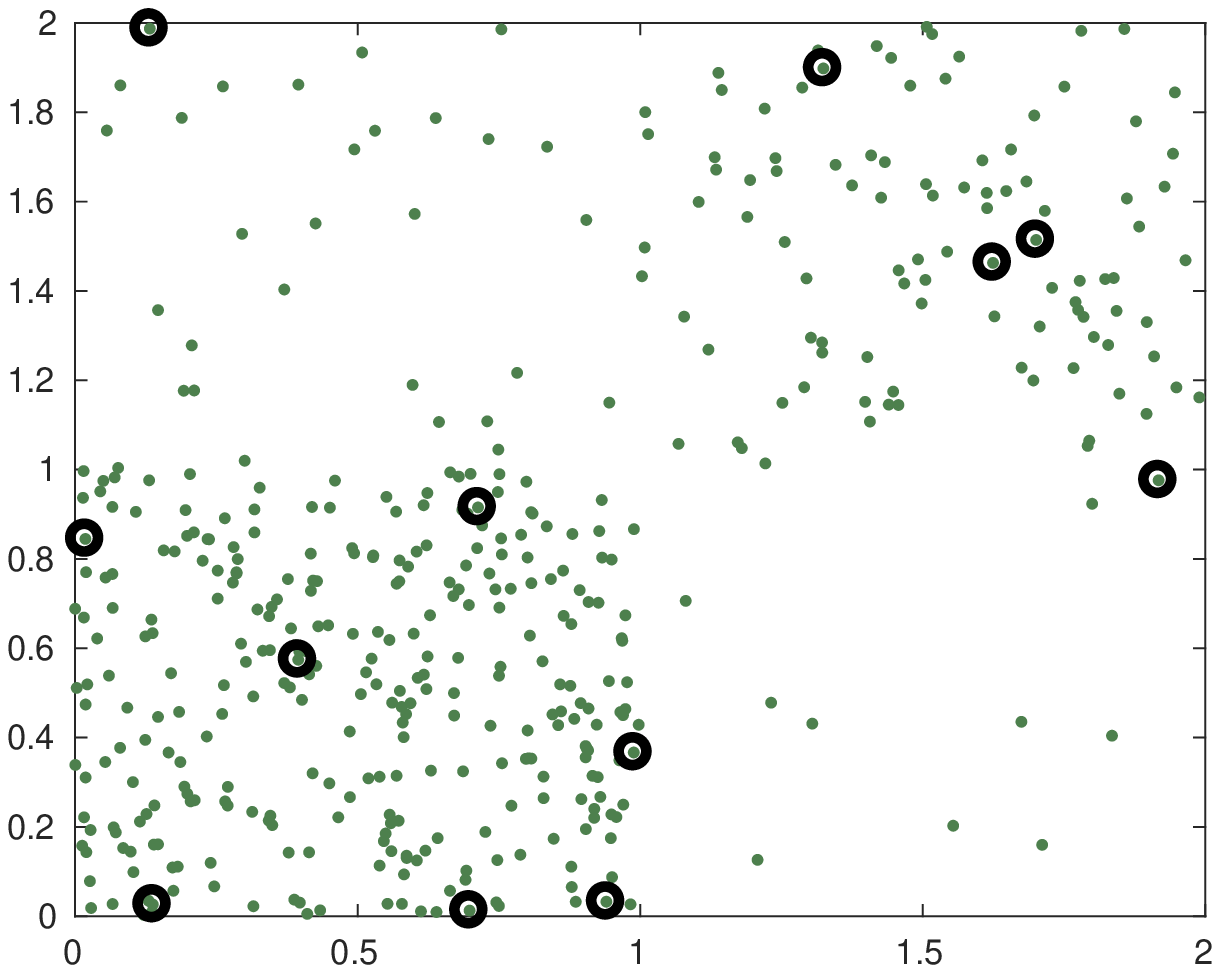}
}
\hspace{-10pt}
\subfigure[DPP exp 4]{
\includegraphics[width=3.4cm]{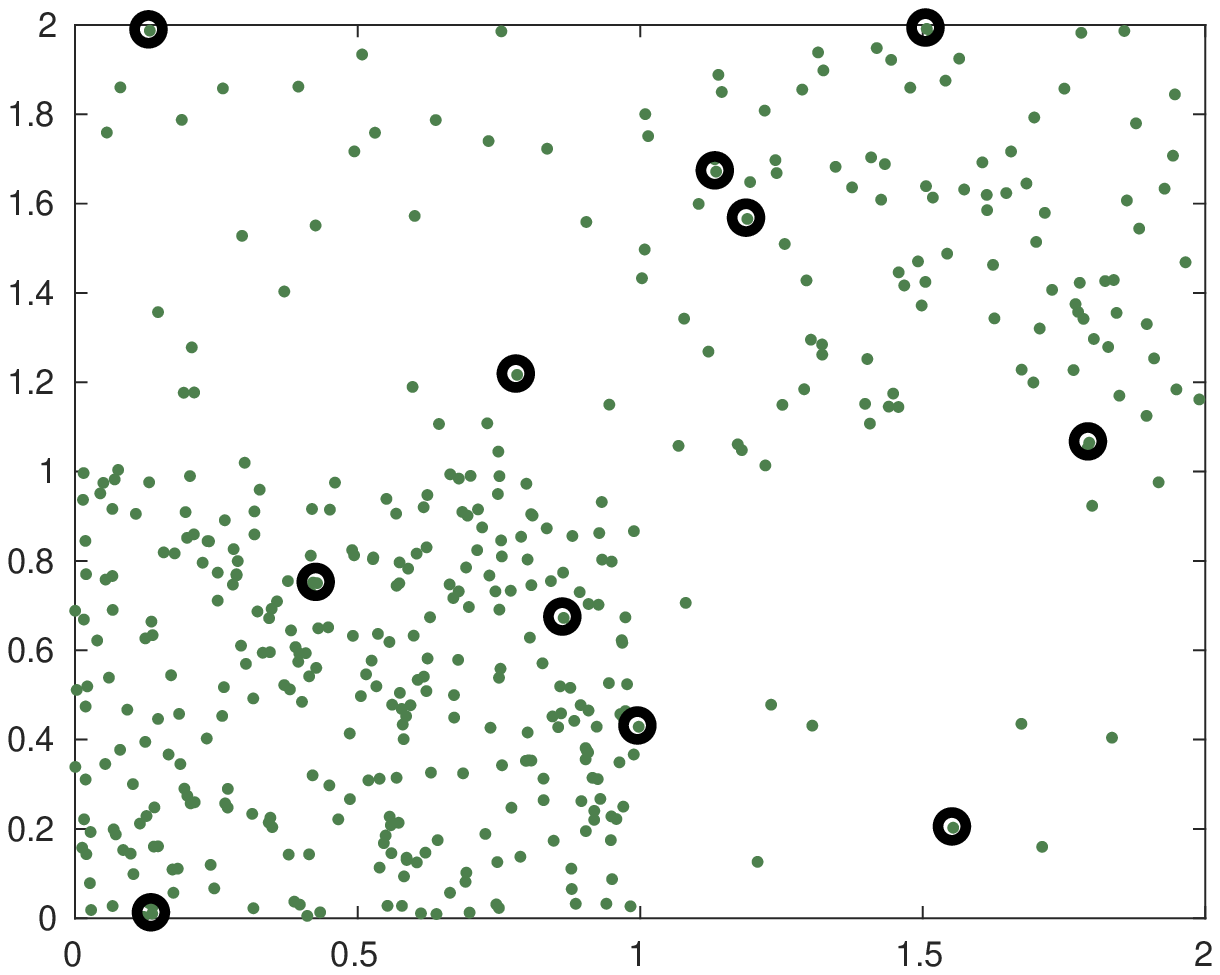}
}

\subfigure[Poisson Disk exp 1]{
\includegraphics[width=3.3cm]{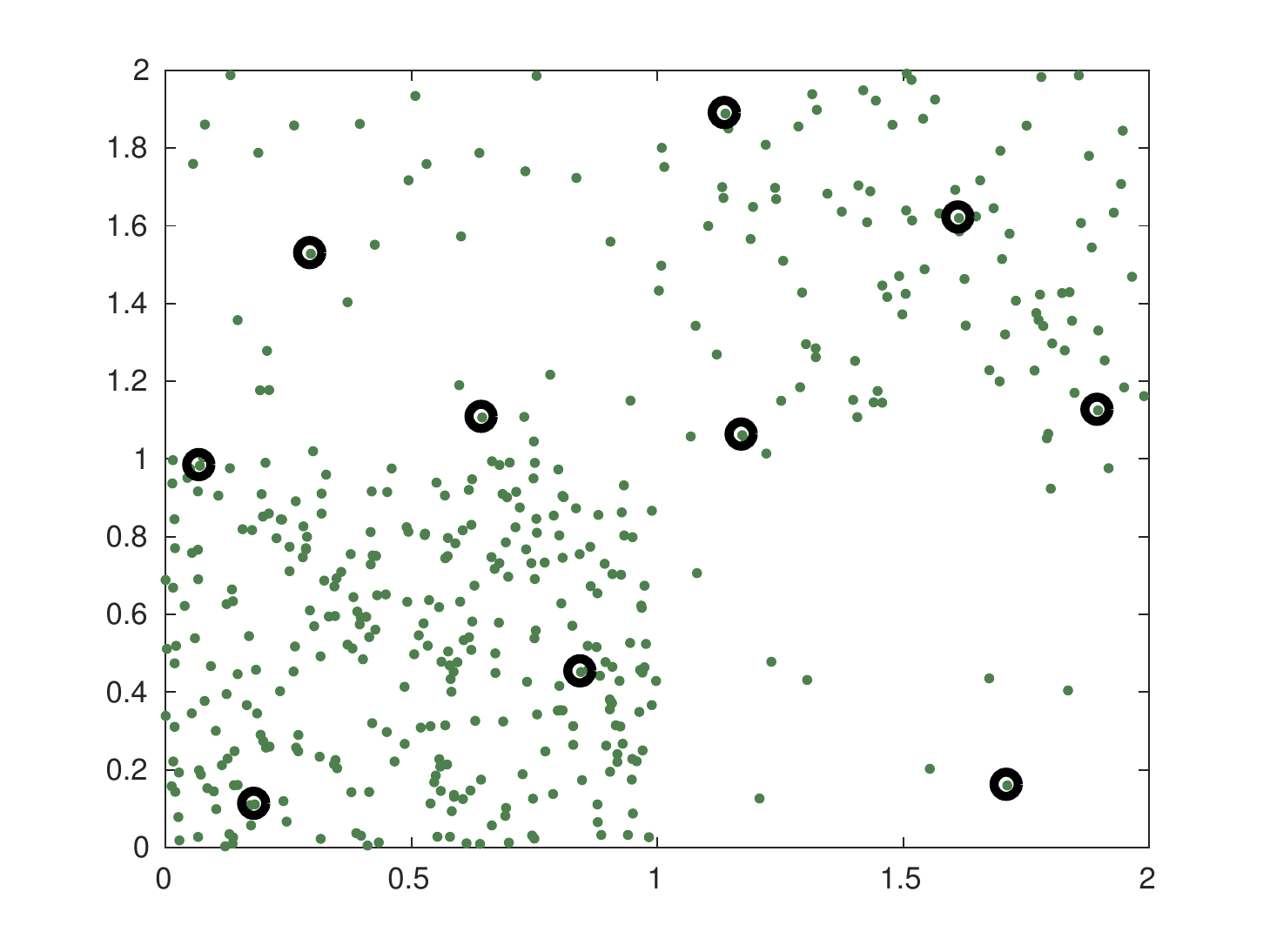}
}
\hspace{-10pt}
\subfigure[Poisson Disk exp 2]{
\includegraphics[width=3.3cm]{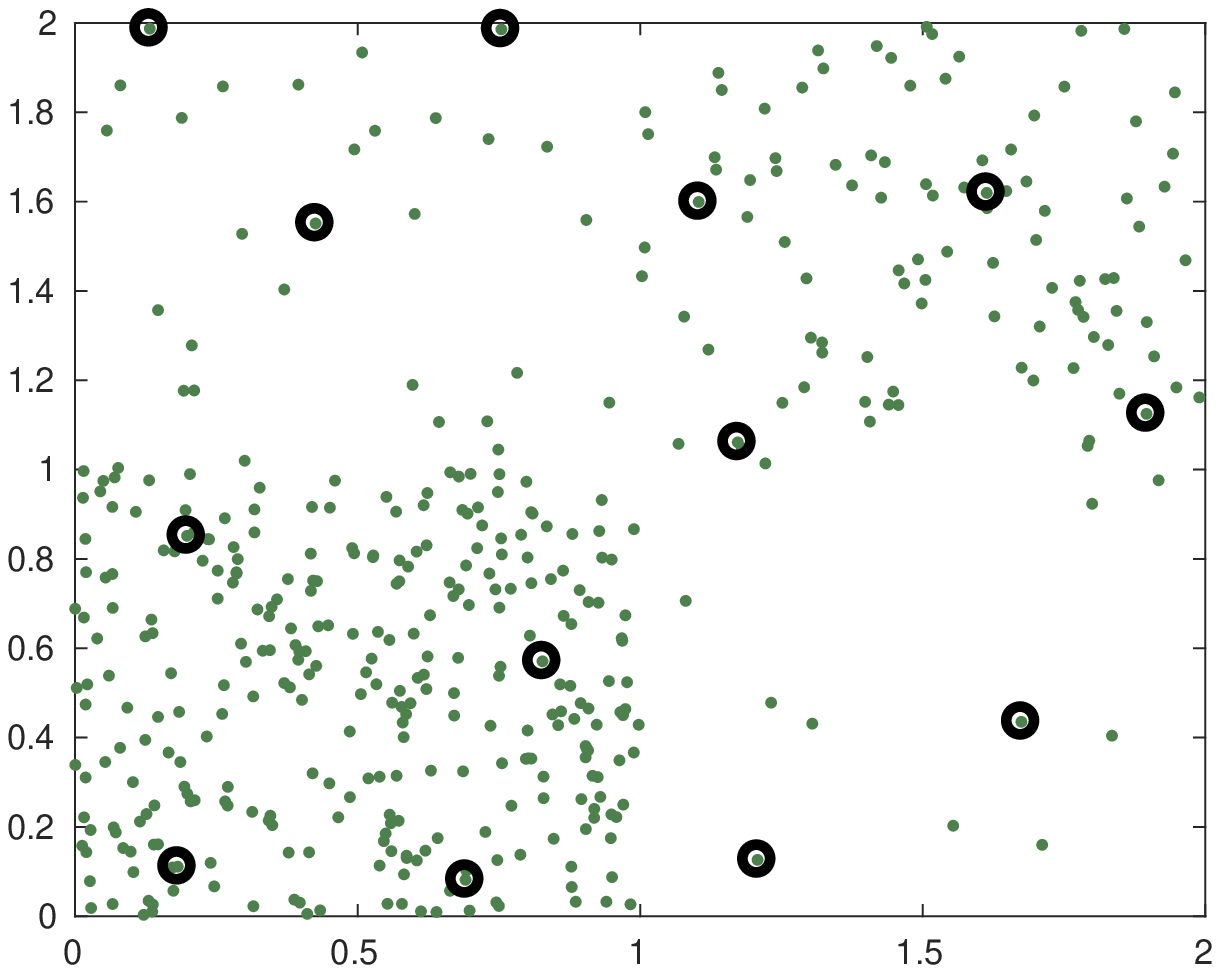}
}
\hspace{-10pt}
\subfigure[Poisson Disk exp 3]{
\includegraphics[width=3.3cm]{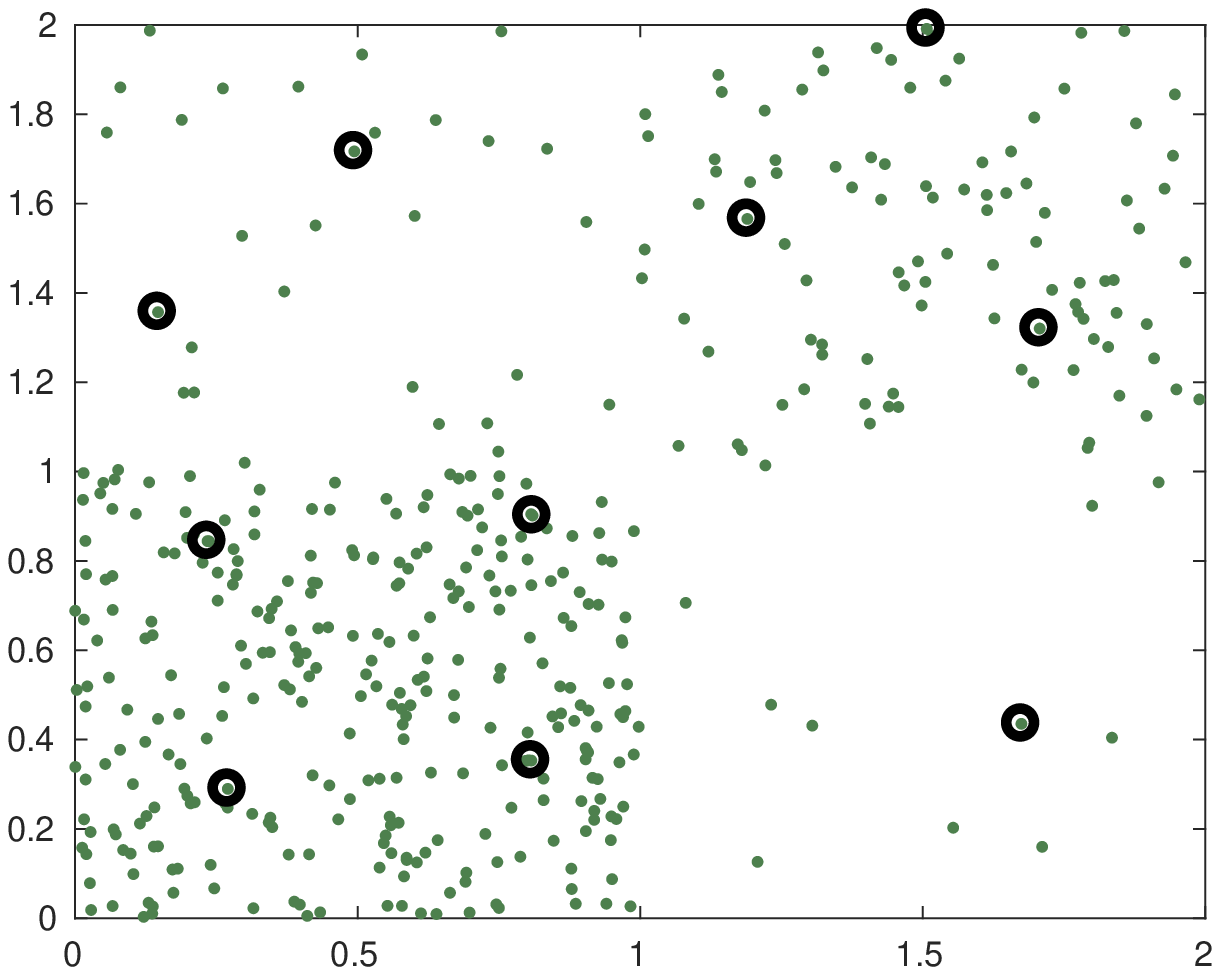}
}
\hspace{-10pt}
\subfigure[Poisson Disk exp 4]{
\includegraphics[width=3.3cm]{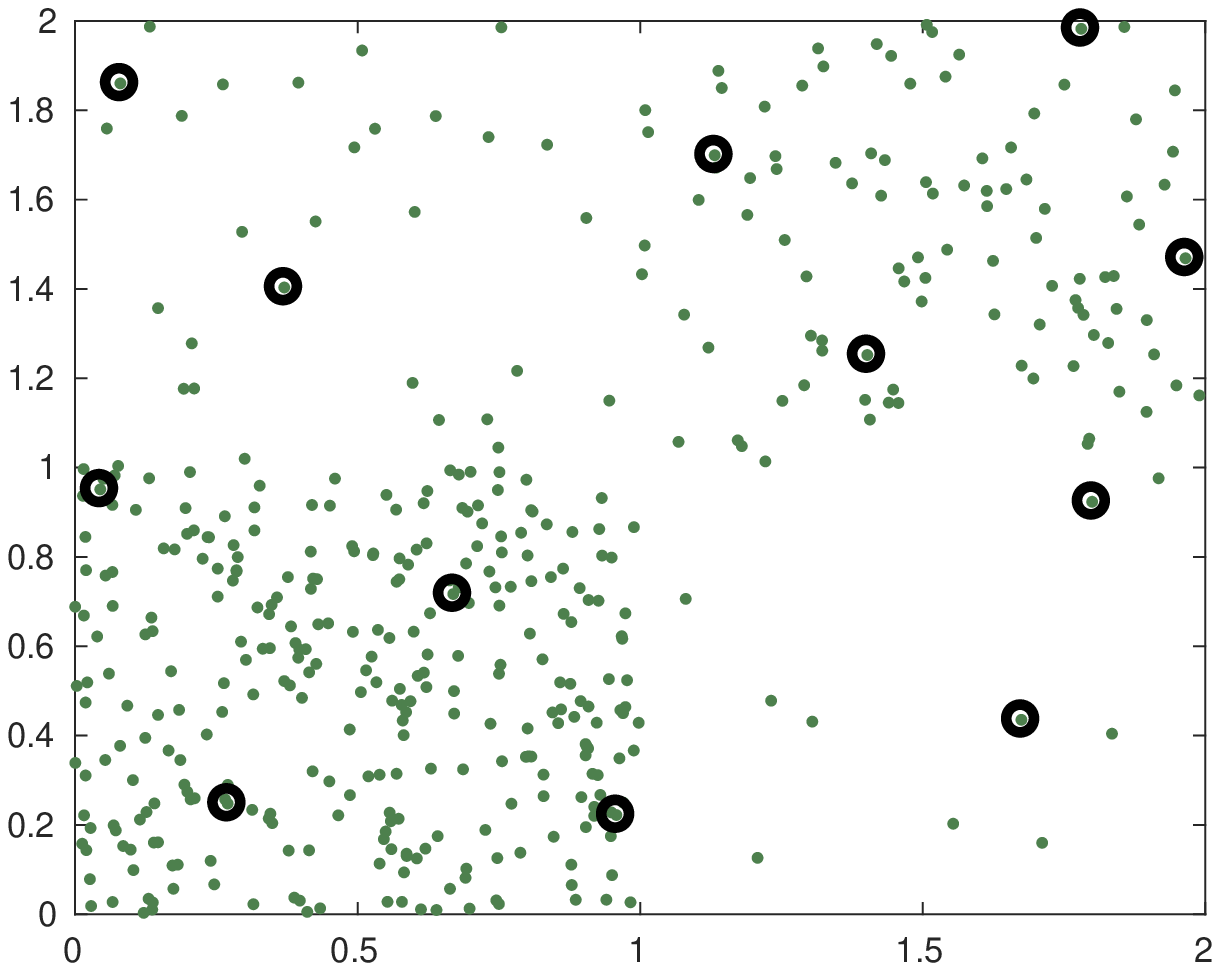}
}
\caption{\footnotesize Examples of sampled data points. The first row are sampled using DPP and the second row are sampled with Poisson Disk sampling. The green dots are the training data and the black circles indicated that the data point has been sampled.}
\label{fig:sample_exp}
\end{figure}

For Oxford Flower classification experiments, we compare all different experimental settings with \cite{Zhang17Stochastic} here. Figure \ref{fig:FlowerTestAcc_DPP} presents the original results from \cite{Zhang17Stochastic} with four different mini-batch sizes. The feature space is constructed by concatenating the off-the-shelf image feature and one-hot-vector label feature. With such concatenation, the $w$ is the weight of label part and (1-$w$) is the weight of off-the-shelf feature part. We perform the experiments under those different settings with our proposed PDS.  Figure \ref{fig:testAcc_appendix}  shows the performance (test accuracy at the end of each training epoch) using Vanilla PDS. Our proposed method with Vanilla PDS exhibits similar performance as DPP in all different experimental settings (as expected), but much more efficiently (as discussed in the paper). However, as shown in the Table 1 in the paper, DPS is much more efficient than DPP. To be noticed,  in the main paper, the results are with $w=0.7$.    

\begin{figure}[t]
\begin{center}
\subfigure[ k=50
]{
\includegraphics[width=3.4cm, height = 2.85cm]{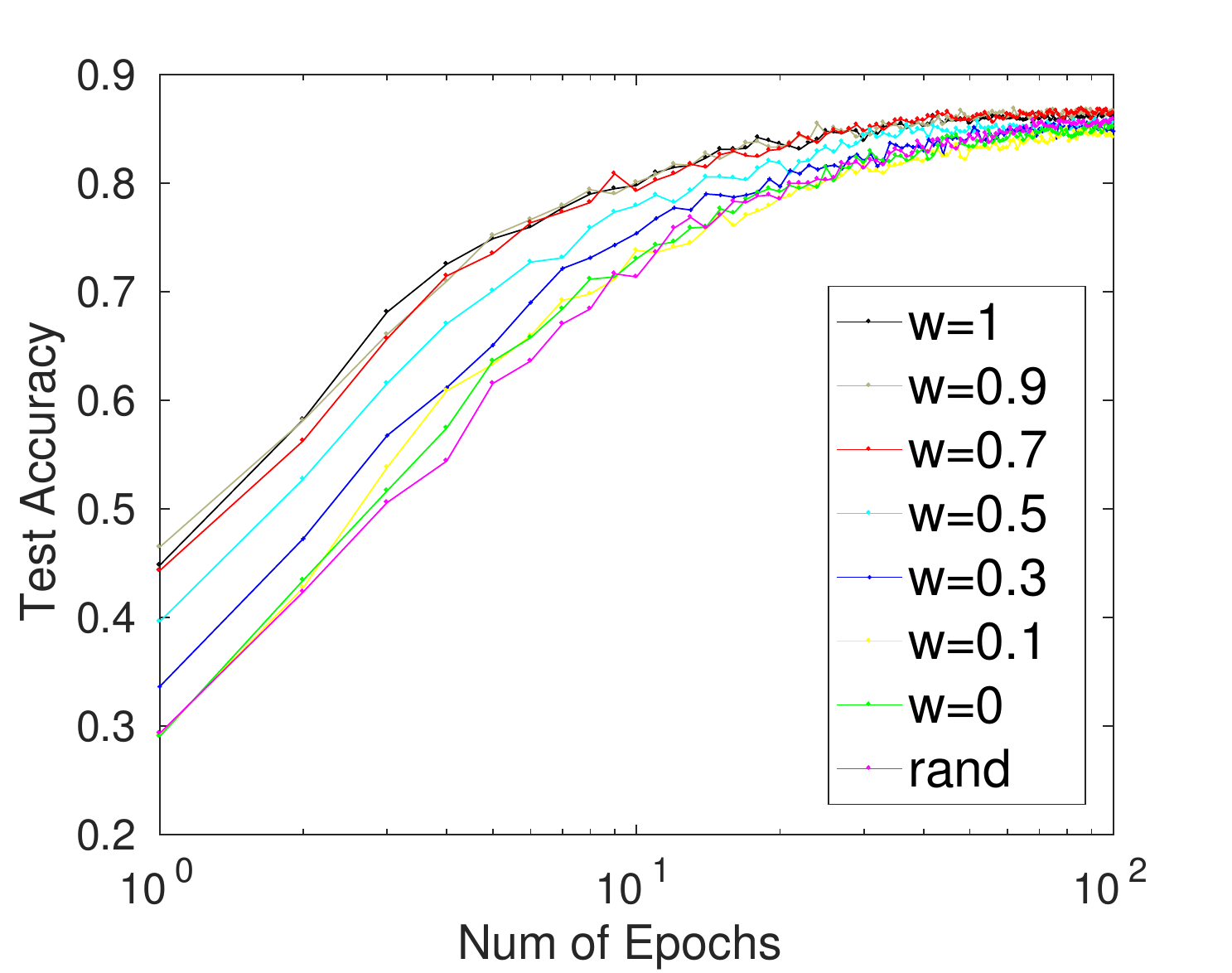}
}
\hspace{-10pt}
\subfigure[ k=80
]{
\includegraphics[width=3.3cm, height = 2.85cm]{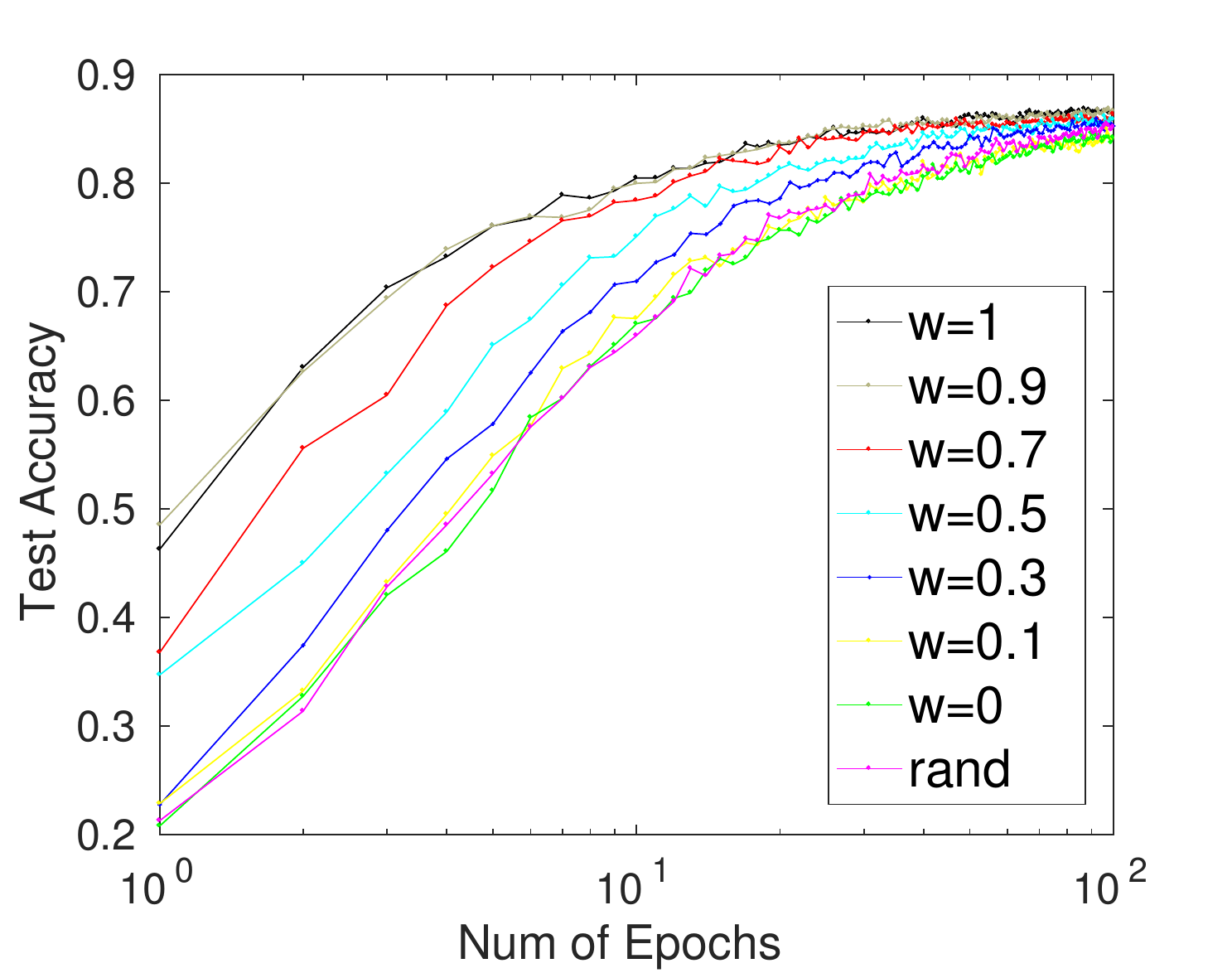}
}
\vspace{-8pt}
\subfigure[ k=102
]{
\includegraphics[width=3.3cm, height = 2.85cm]{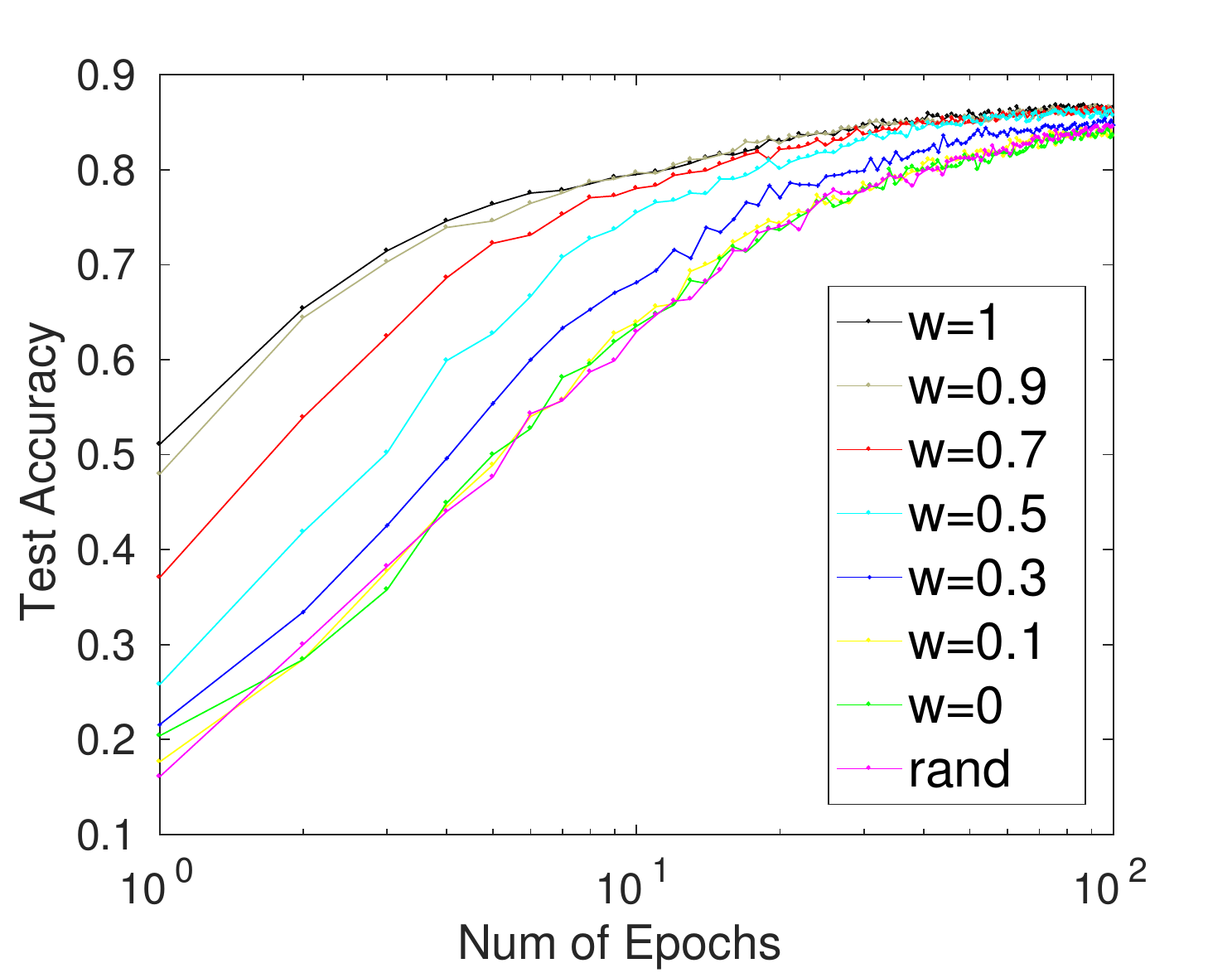}
}
\hspace{-10pt}
\subfigure[k=150
]{
\includegraphics[width=3.3cm, height = 2.85cm]{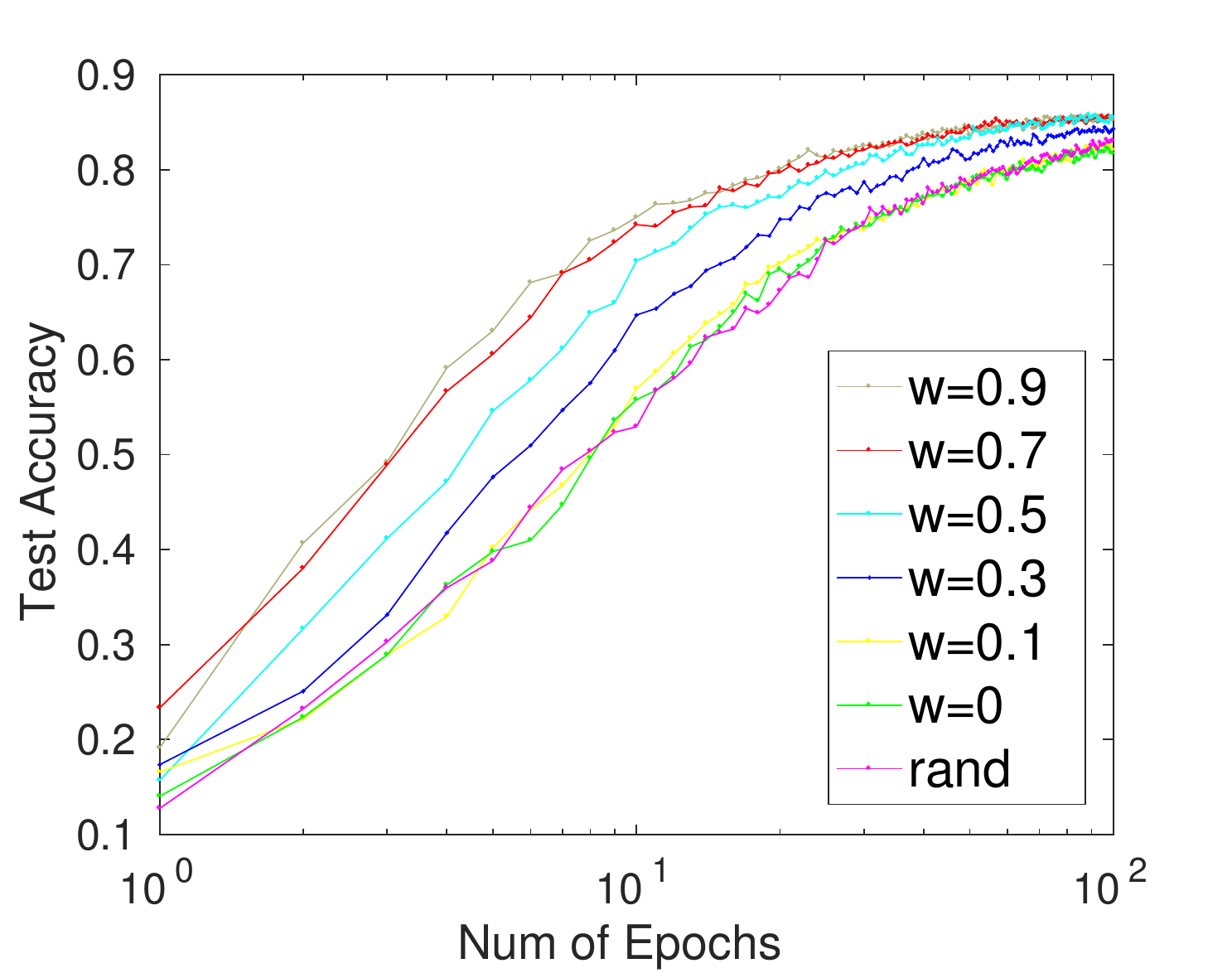}
}
\caption{Result of Oxford Flower Classification with DPP from \cite{Zhang17Stochastic}.} 
\vspace{-8pt}
\label{fig:FlowerTestAcc_DPP}
\end{center}
\end{figure}

\begin{figure*}[t]
\centering
\subfigure[k=50]{
\includegraphics[width=0.25\textwidth]{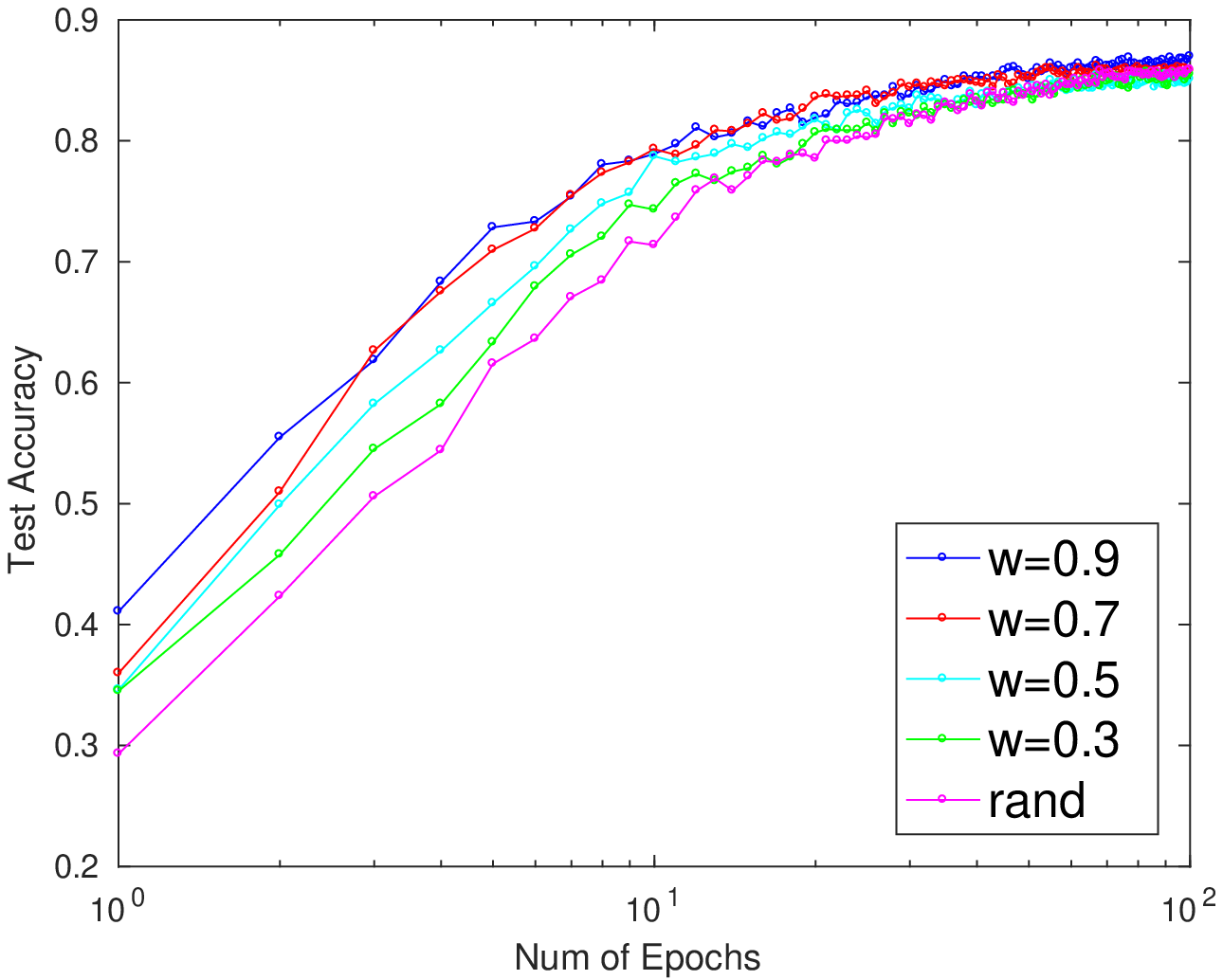}
}
\hspace{-19pt}
\subfigure[k=80]{
\includegraphics[width=0.25\textwidth]{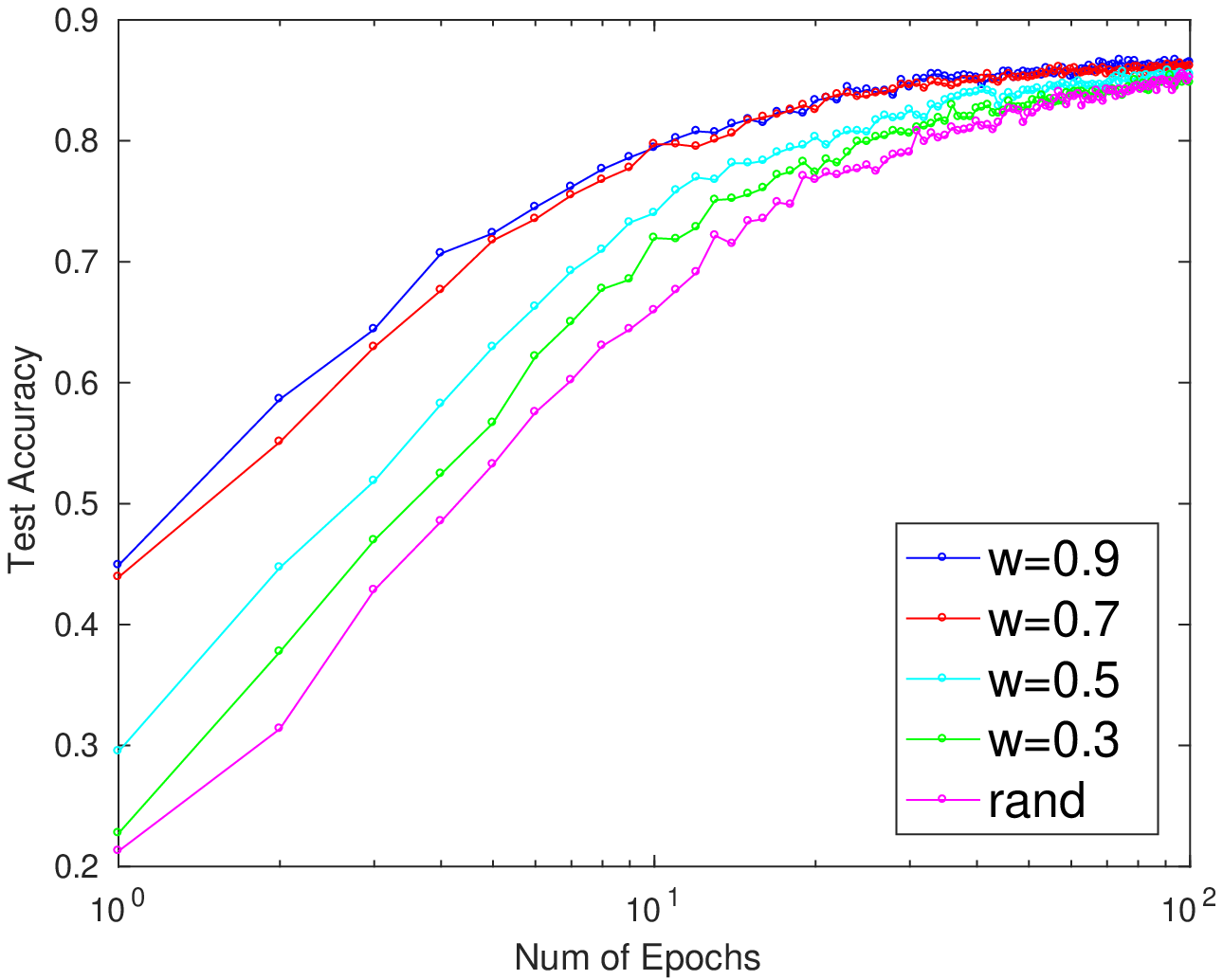}
}
\hspace{-19pt}
\subfigure[k=102]{
\includegraphics[width=0.25\textwidth]{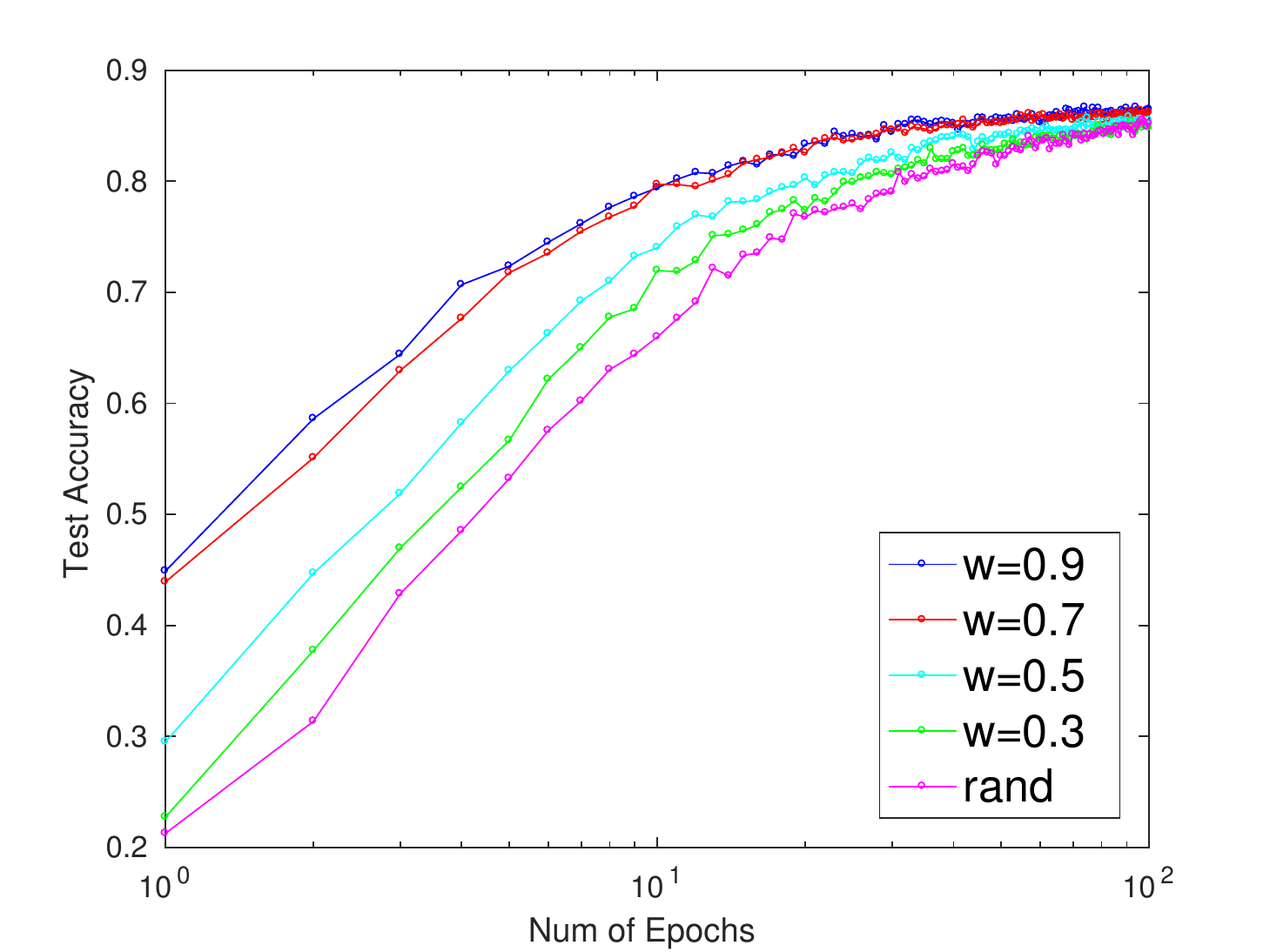}
}
\hspace{-19pt}
\subfigure[k=150]{
\includegraphics[width=0.25\textwidth]{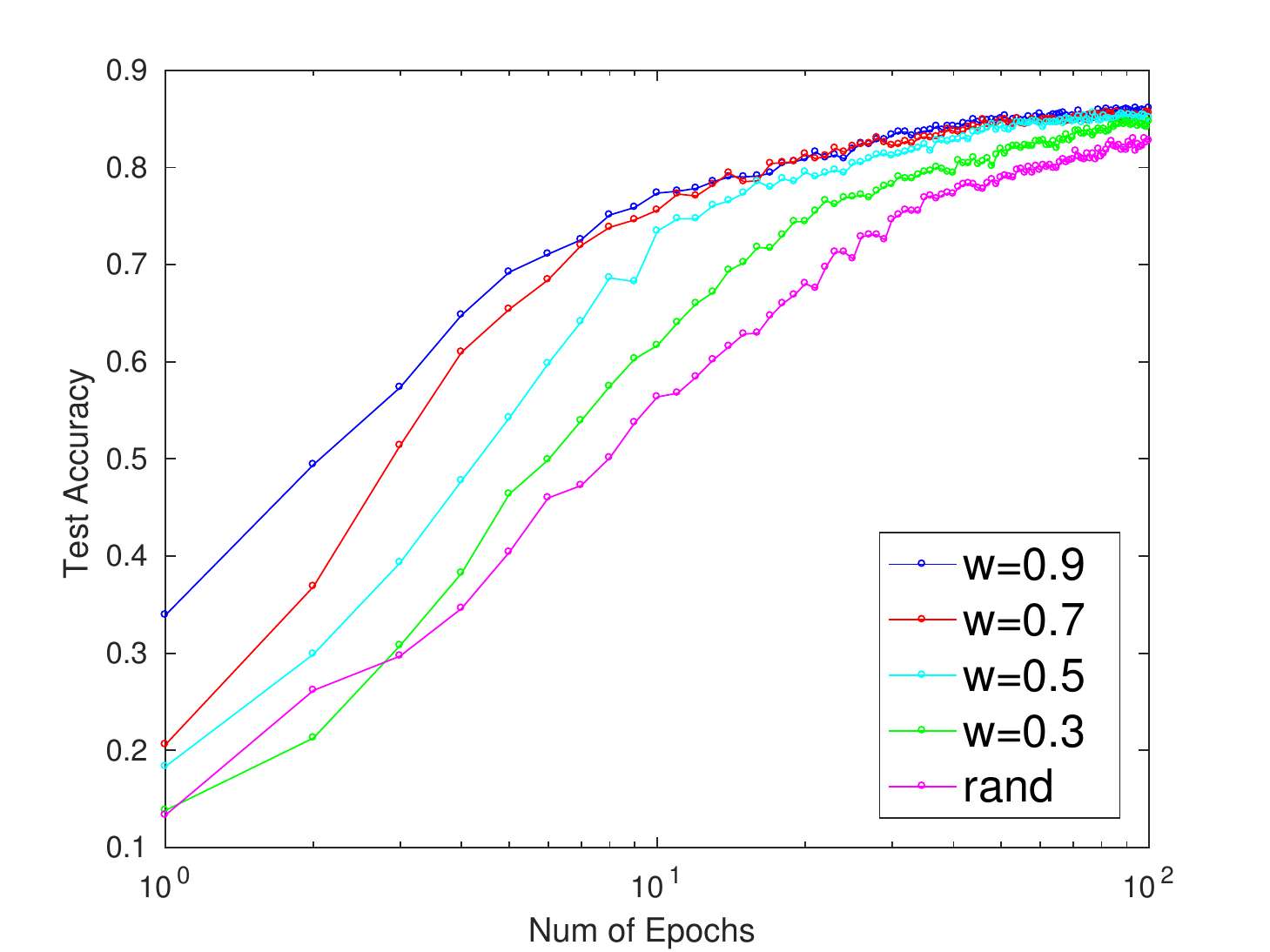}
}
\vspace{-8pt}
\caption{ Oxford Flower Classification with Softmax using PDS. The performance is similar as using DPP Figure \ref{fig:FlowerTestAcc_DPP}.}  
\label{fig:testAcc_appendix}
\end{figure*}

\begin{table}[h]
\centering
\begin{tabular}{c c c c c c}
\hline
k & 50 & 80 & 102 & 150 & 200 \\
\hline
k-DPP time & 7.1680 & 29.687& 58.4086 & 189.0303 & 436.4746\\
Fast k-DPP & 0.1032 & 0.3312 & 0.6512 & 1.8745 & 4.1964\\
Poisson Disk & 0.0461 &  0.0795 &  0.1048 &0.1657& 0.2391\\
\hline
\end{tabular}
\caption{Oxford Flower Classification experiment. CPU time (sec) to sample one mini-batch.
}
\label{tab:sampletime_appendix}
\end{table}

\begin{figure}
\subfigure[Data Visualization for MNIST]{
\includegraphics[width = 0.48\textwidth, height = 5cm]{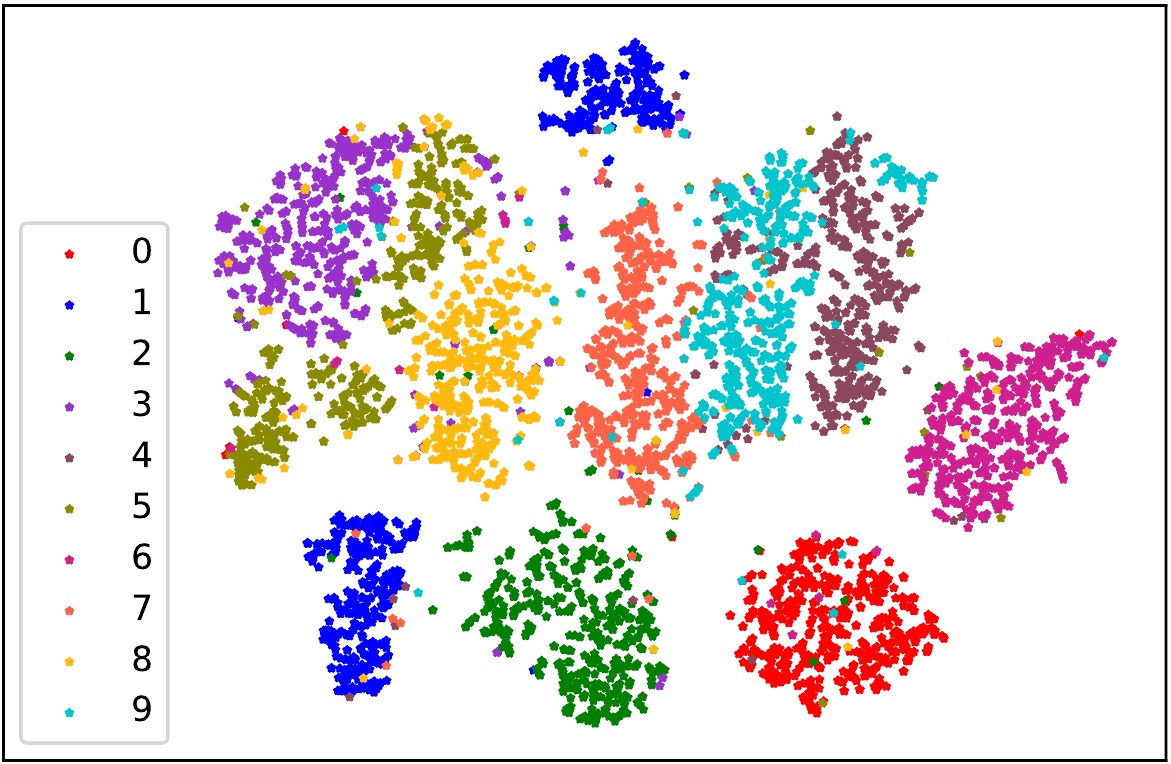}
\label{fig:MNIST:tSNE}
}
\subfigure[Mingling index distribution]{
\includegraphics[width = 0.48\textwidth, height = 5cm]{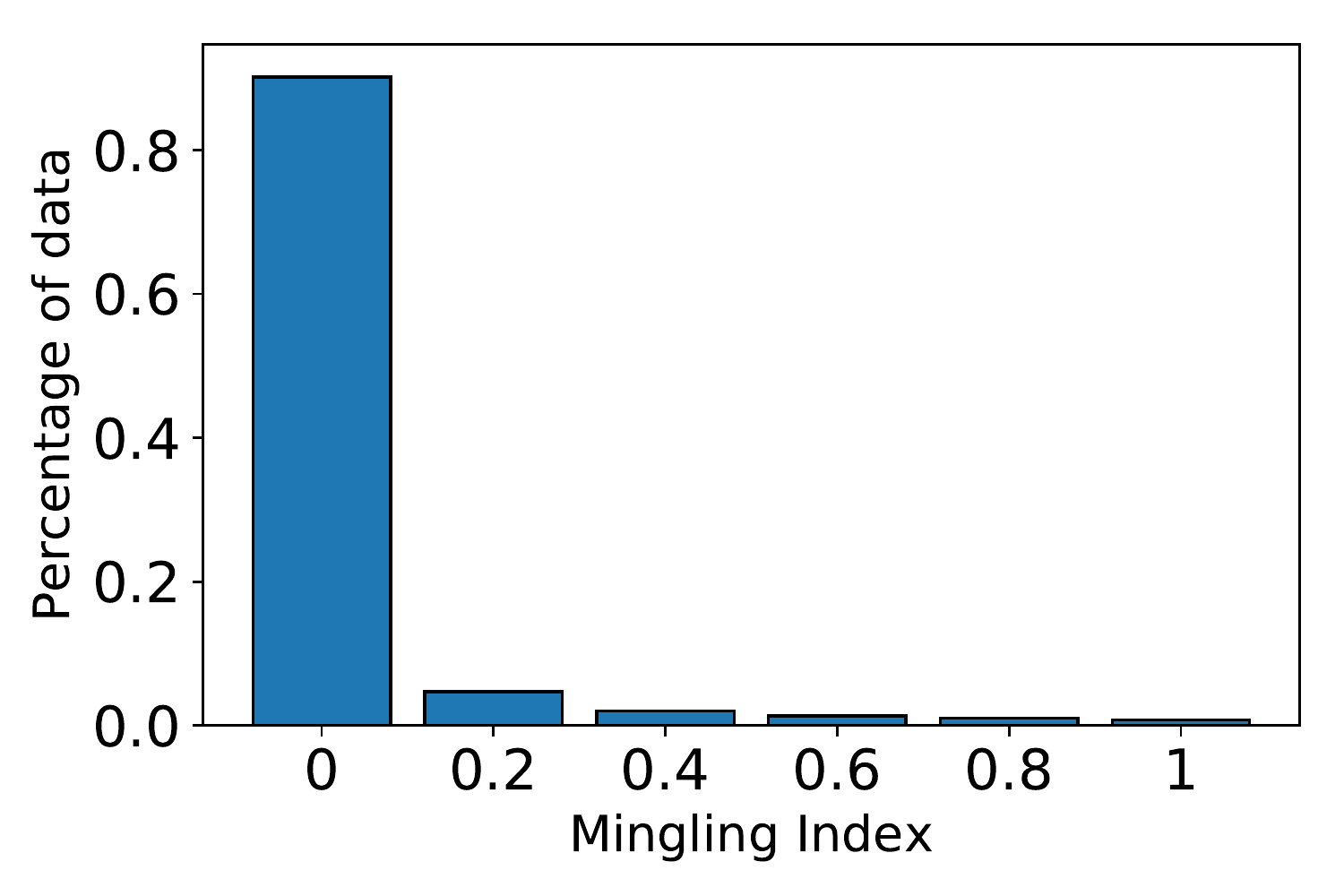}
\label{fig:MNIST:mingling}
}
\caption{MNIST experiment. Panel (a) visualizes the dataset using tSNE (5000 data points from training data are used for visualization). Panel (b) shows the distribution of data with different mingling index values.}
\end{figure}

\begin{figure}
\subfigure[Mean Performance Comparison]{
\includegraphics[width = 0.45\textwidth]{MNIST/MNIST_Compare_Five_PD_log_bigFont_PDS_mean_randOrder.pdf}
\label{fig:MNIST:performance_mean_app}
}
\subfigure[ActiveBias (baseline)]{
\includegraphics[width = 0.45\textwidth]{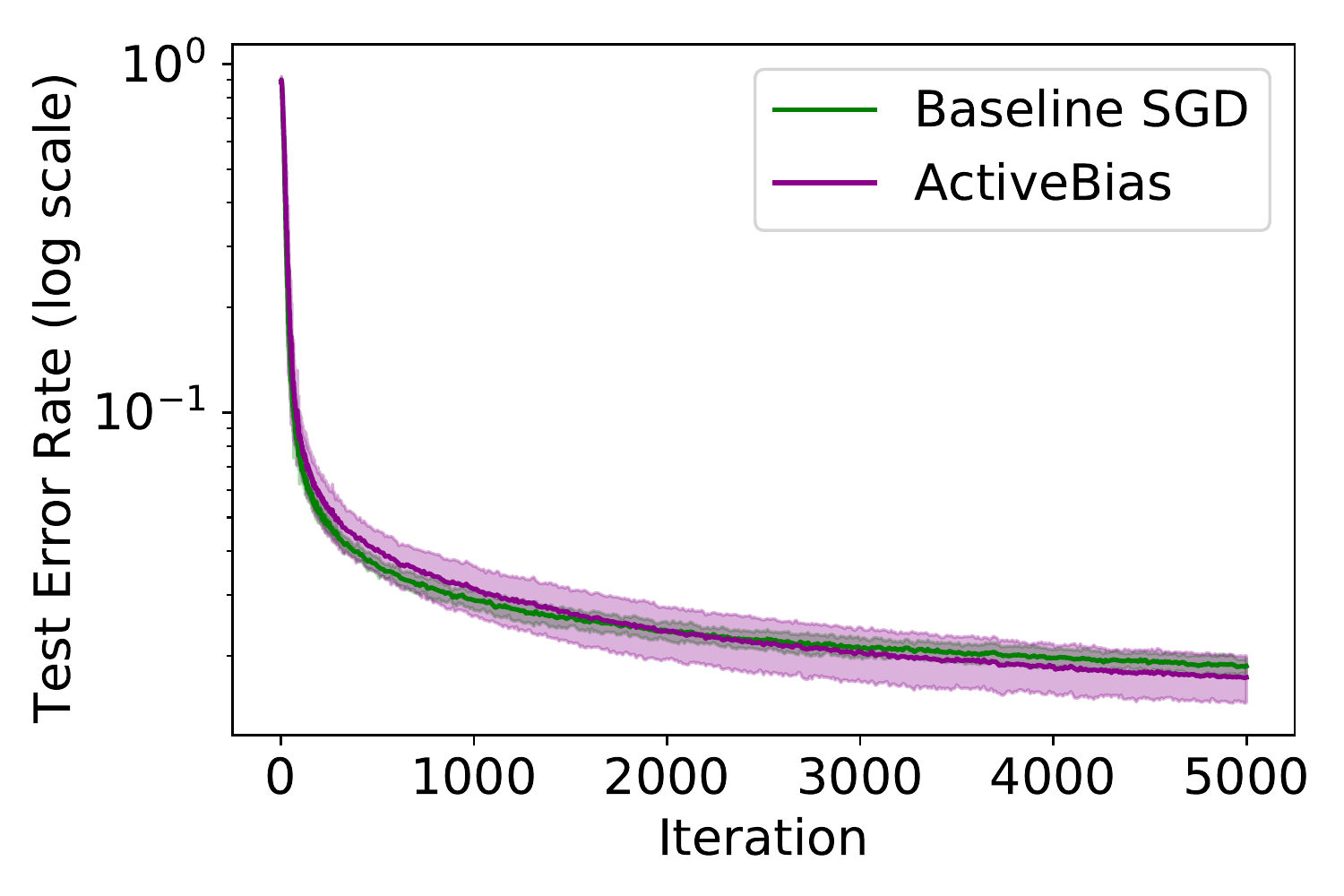}
\label{fig:MNIST:ActiveBIas}
}
\subfigure[VannilaPDS comparison]{
\includegraphics[width = 0.45\textwidth]{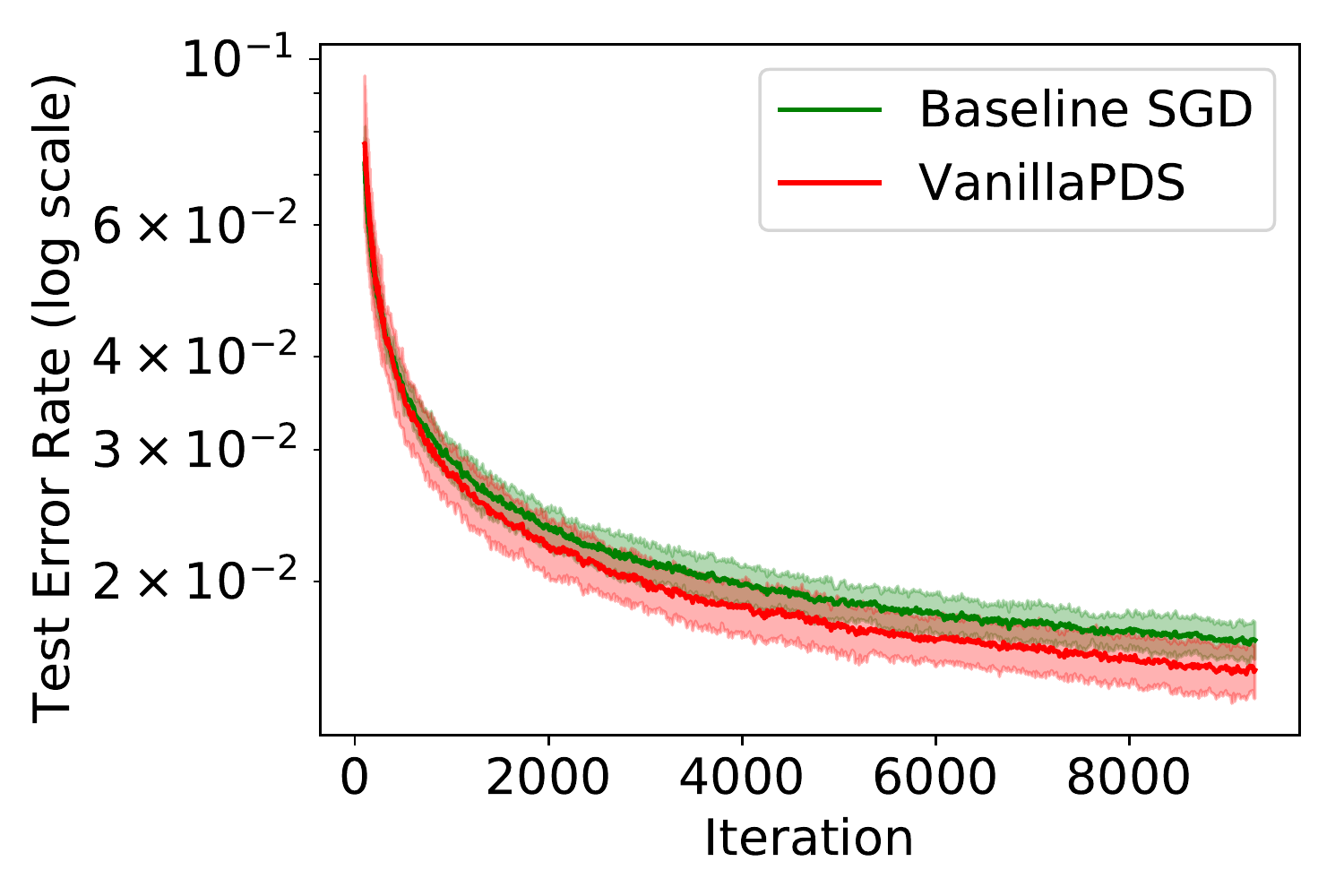}
\label{fig:MNIST:VannilaPDS}
}
\subfigure[EasyPDS comparison]{
\includegraphics[width = 0.45\textwidth]{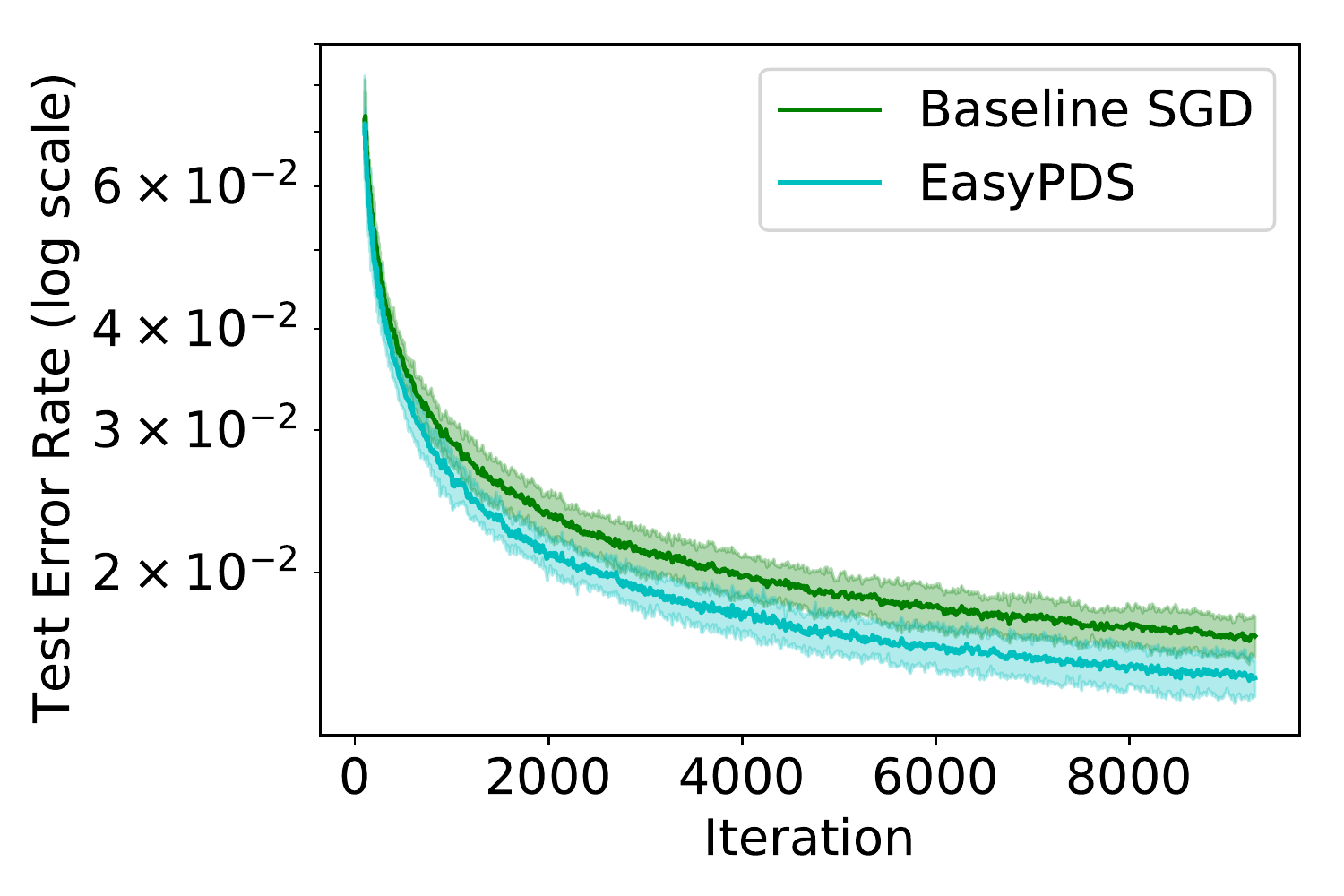}
\label{fig:MNIST:EasyPDS}
}
\subfigure[DensePDS comparison]{
\includegraphics[width = 0.45\textwidth]{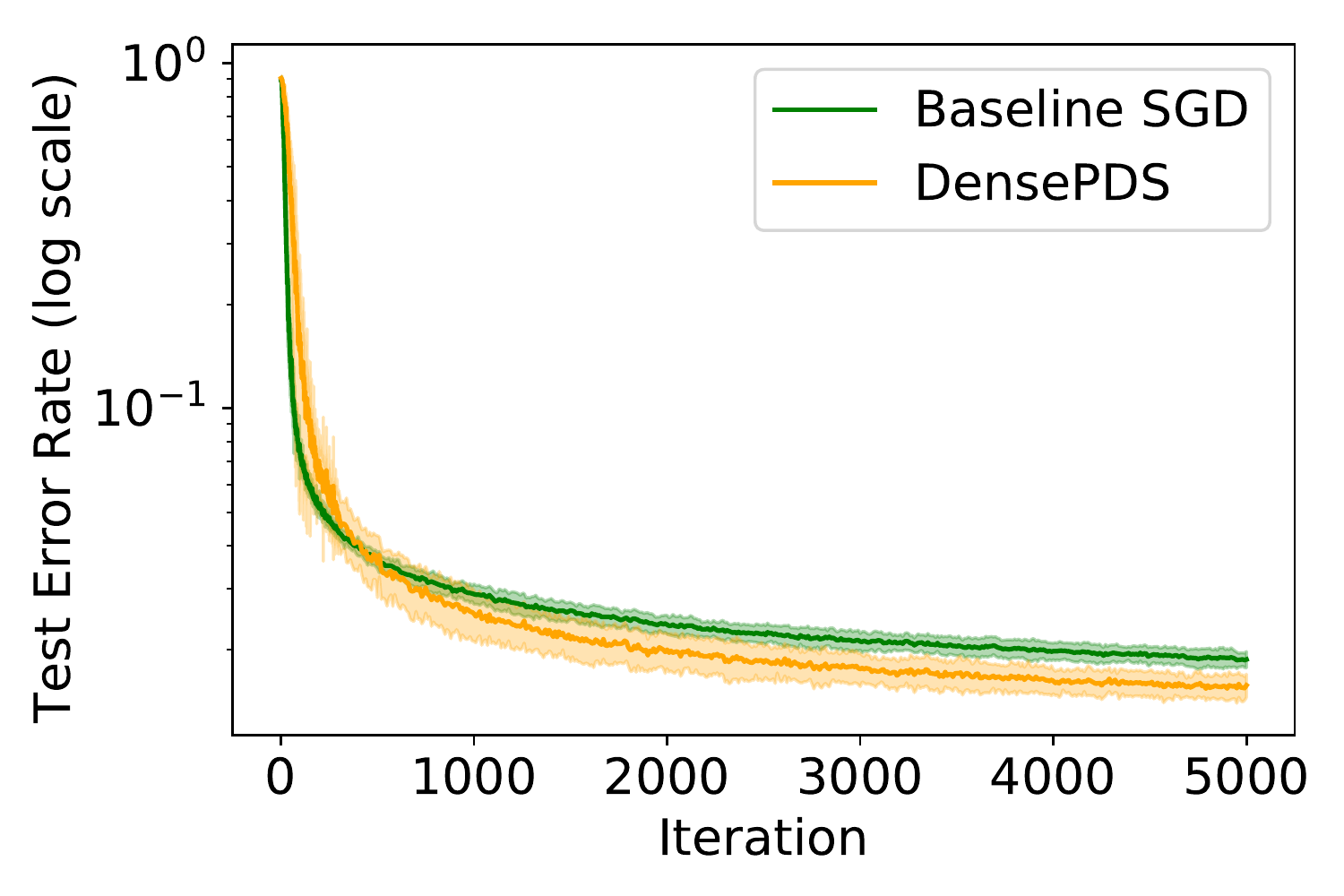}
\label{fig:MNIST:DensePDS}
}
\subfigure[AnnealPDS comparison]{
\includegraphics[width = 0.45\textwidth]{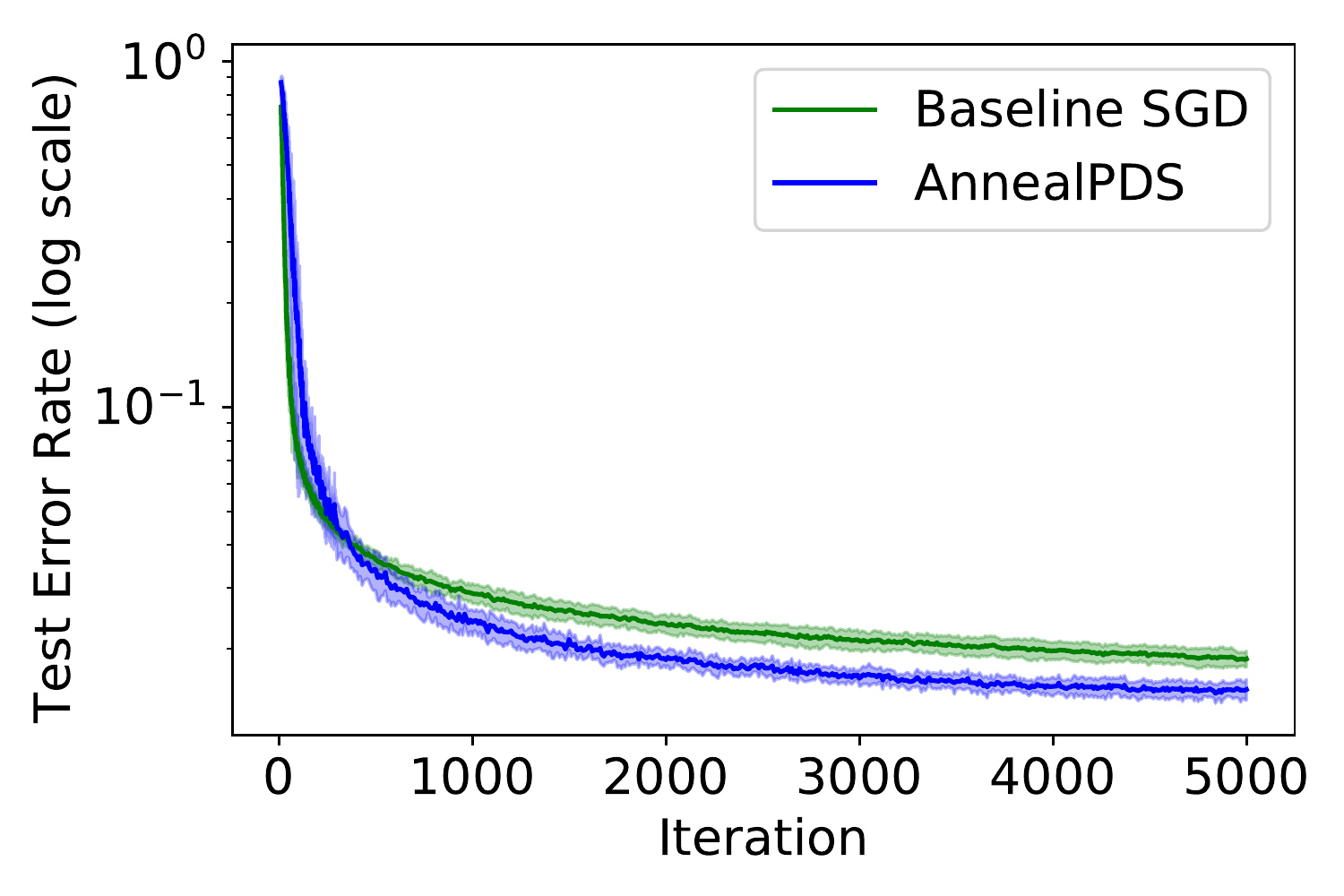}
\label{fig:MNIST:AnnealPDS}
}
\caption{MNIST experiments performance comparison. Panel (a) shows the mean performance of multiple variants of our methods and two baseline methods. Panel (b) shows the ActiveBias baseline performance in comparison with traditional SGD performance (main baseline). Panel (c) - (f) show the performance of our method compared to the traditional SGD. All the experiments are run ten times. The mean and standard deviations are visualized.}
\end{figure}

Figure \ref{fig:Annel} shows the annealing parameter that is used in the MNIST experiment in Section \ref{sec:MNIST}. The original  distribution of data with different mingling indices is $h=[0.9017, 0.0474, 0.0212, 0.013, 0.0096, 0.0071]$. We uses an annealing schedule where  $\pi_n$ is set to normalized $h^{1/{log(0.01n+1)}}$. In this way, data with mingling index 0 (easy examples) are sampled more often in the early iterations during training.   
\vspace{20pt}

\begin{figure}[t]
\floatbox[{\capbeside\thisfloatsetup{capbesideposition={right,top},capbesidewidth=8cm}}]{figure}[\FBwidth]
{\caption{The annealing schedule that is used in the 'Anneal PDS' MNIST experiment in the paper. The density is deceasing for easy examples (with mingling indices $0$), while increasing for non-trivial examples (with bigger mingling indices). }\label{fig:Annel}}
{\vspace{-10pt}\includegraphics[width=6cm]{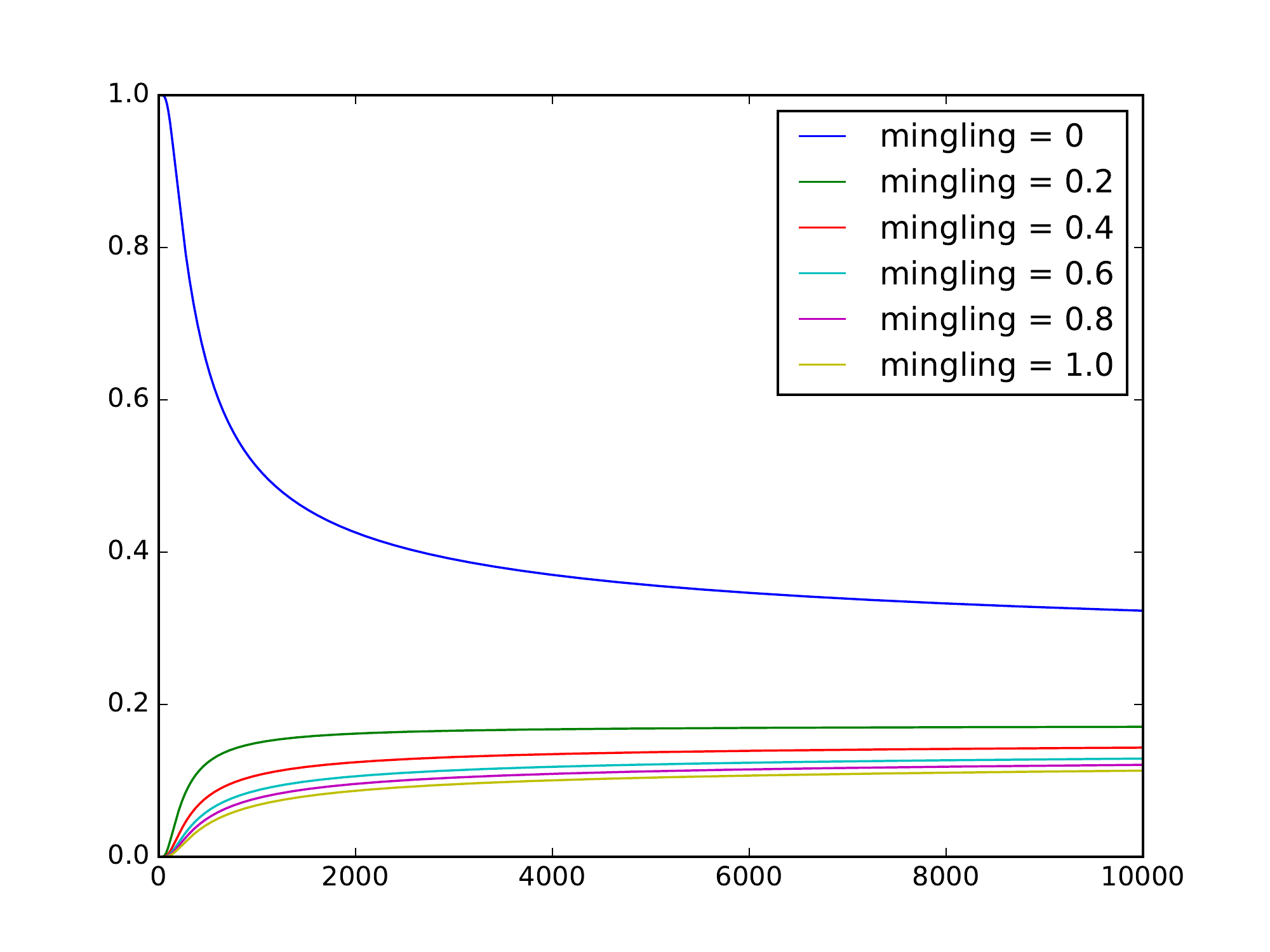}}
\end{figure}

Figure \ref{fig:Speech_other_hist_mingling} shows the distributions of data with different mingling index over these twelve classes in the speech experiment. We can see that most points with mingling indices  $0$ are from the ``silence'' class where the data are artificially constructed from background noise. Data from other classes, in general, have higher mingling indices, since they are not well separated as shown in the t-SNE plot in the paper. Additionally, we can see in Figure \ref{fig:Speech_other_hist_mingling} (b) that most data points with mingling index $1$ are from the ``unknown'' class, that is the noisiest class because it corresponds to a number of different words.

\begin{figure*}[t]
\centering
\subfigure[Data Visualization]{
\includegraphics[width = 0.43\textwidth, height = 5cm]{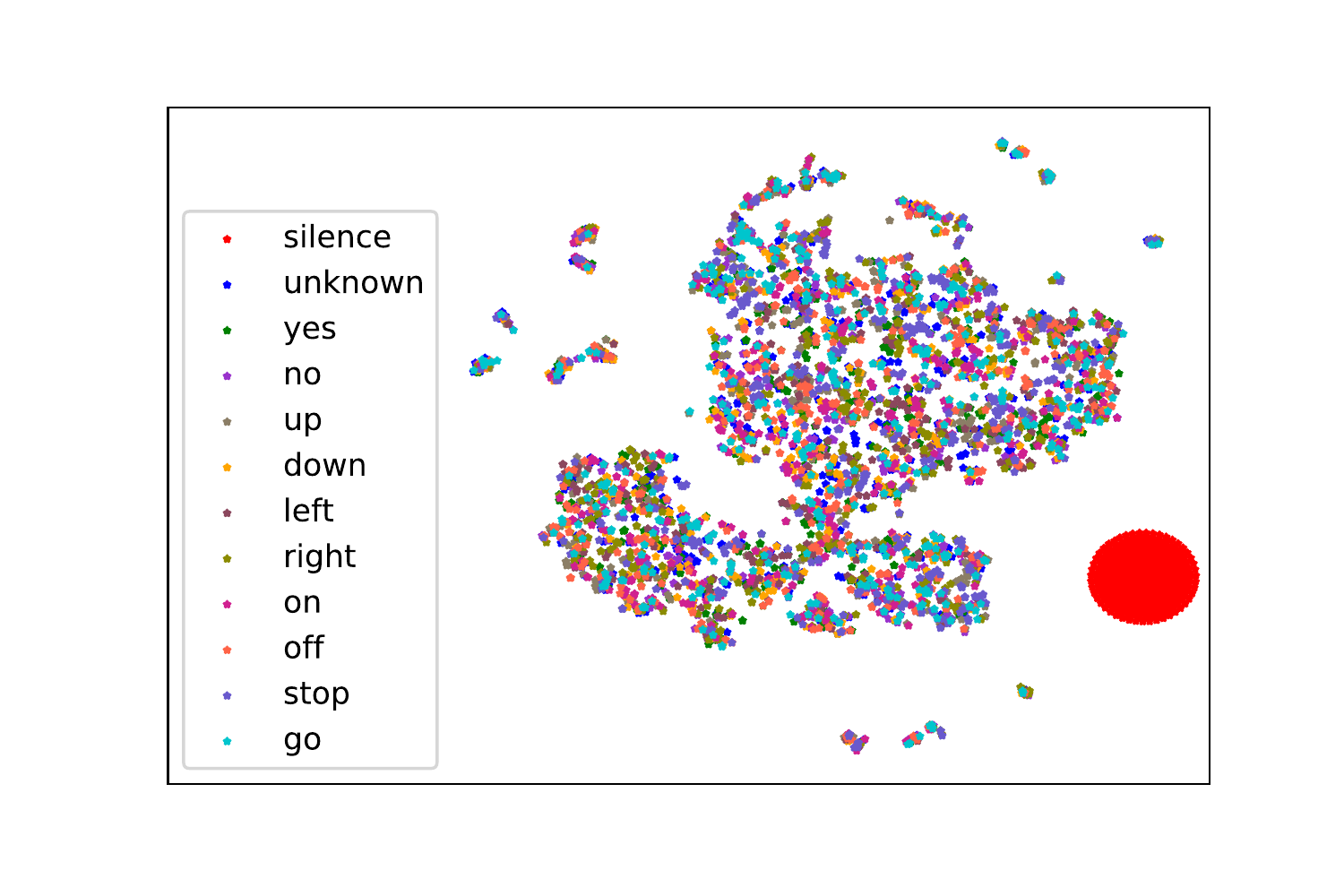}
\label{fig:Speech:tSNE}
}
\hspace{-5pt}
\subfigure[Mingling index distribution]{
\includegraphics[width = 0.45\textwidth, height = 5cm]{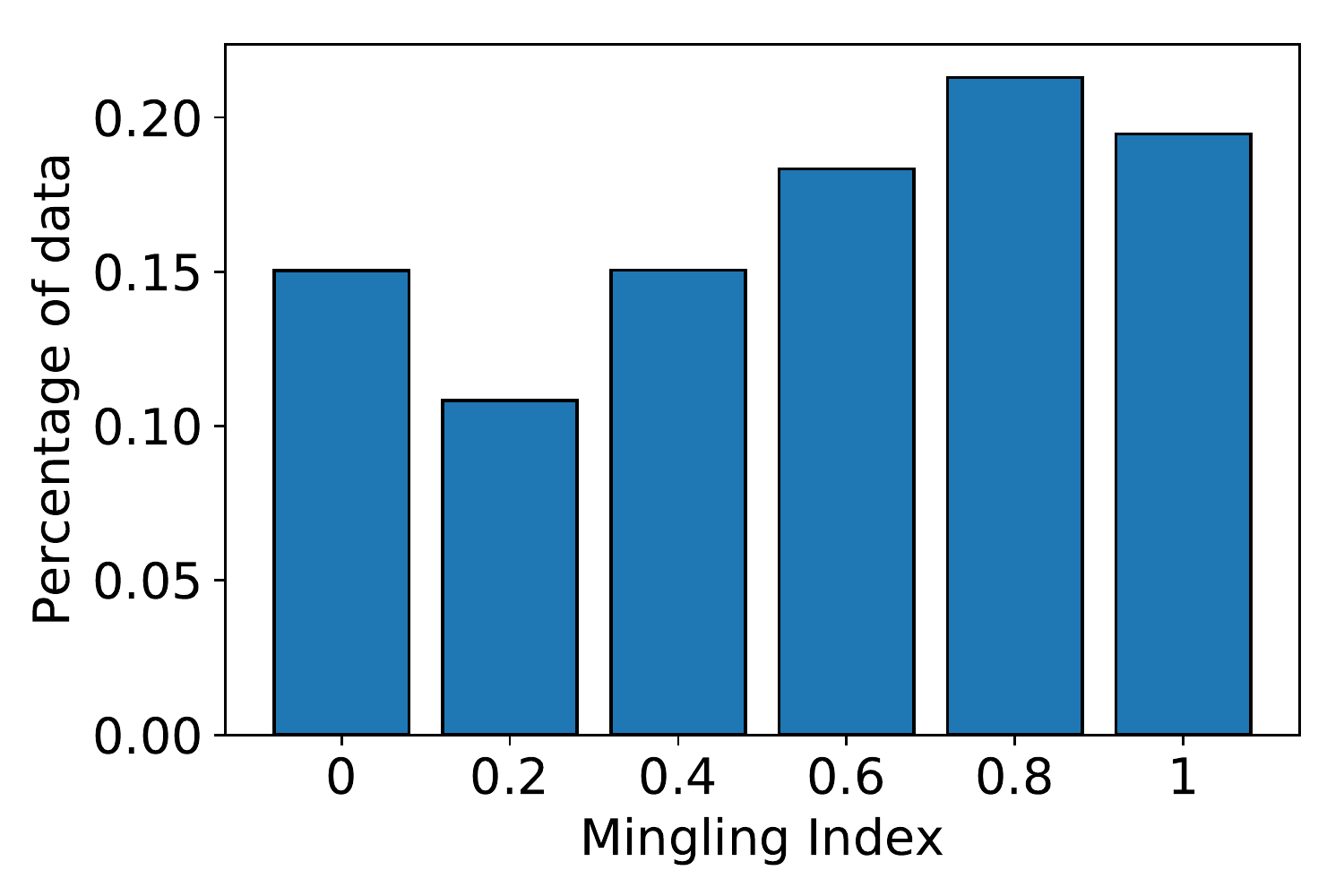}
\label{fig:Speech:mingling}
}
\caption{\footnotesize Speech experiment. Panel (a) visualizes the dataset using t-SNE.
Panel (b) shows the histogram of data with mingling index $0$. Panel }
\label{fig:Speech}
\end{figure*}

\begin{figure*}[t]
\centering
\subfigure[Mingling index 0]{
\includegraphics[width = 0.31\textwidth]{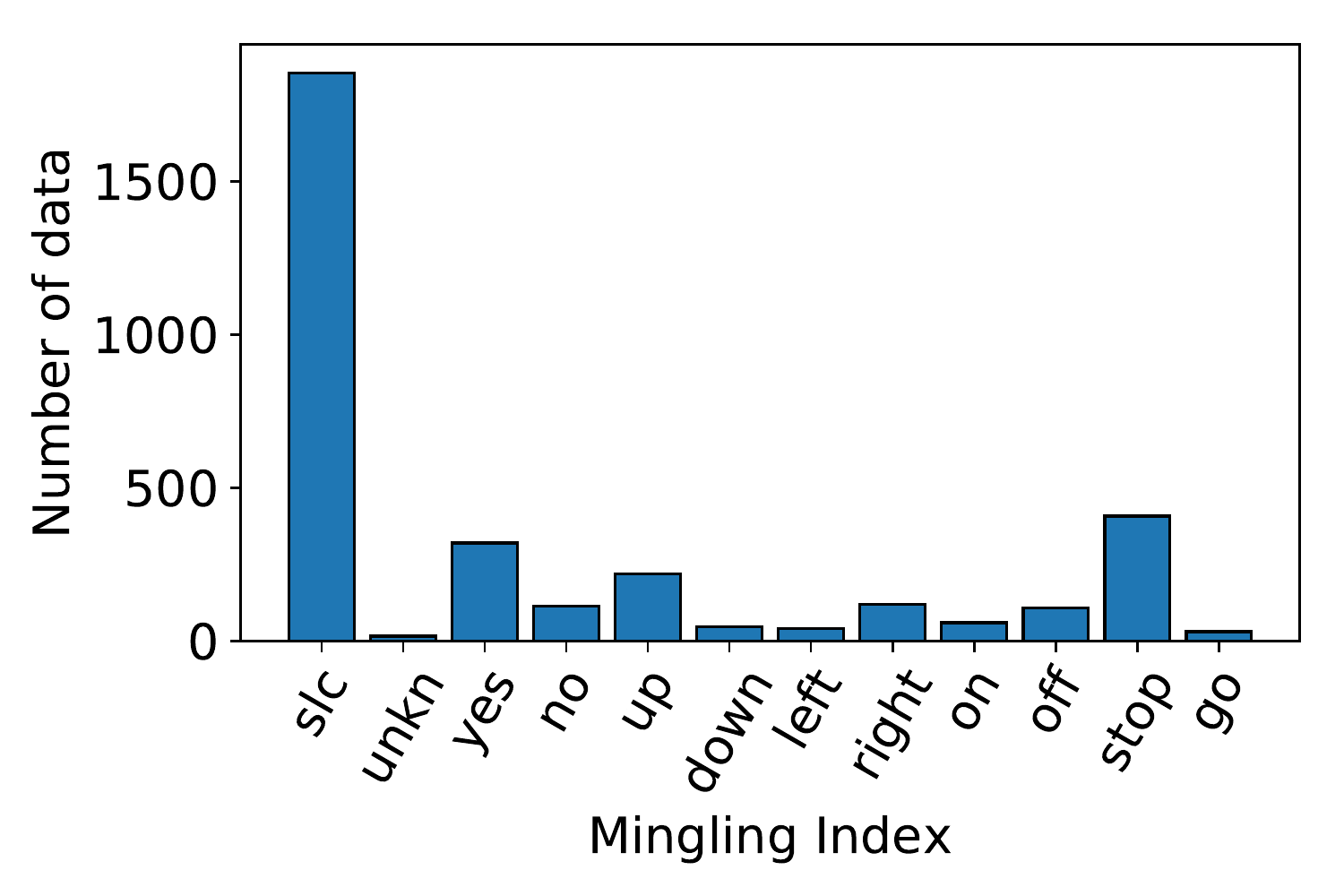}
}
\subfigure[Mingling index 0.2]{
\includegraphics[width = 0.31\textwidth]{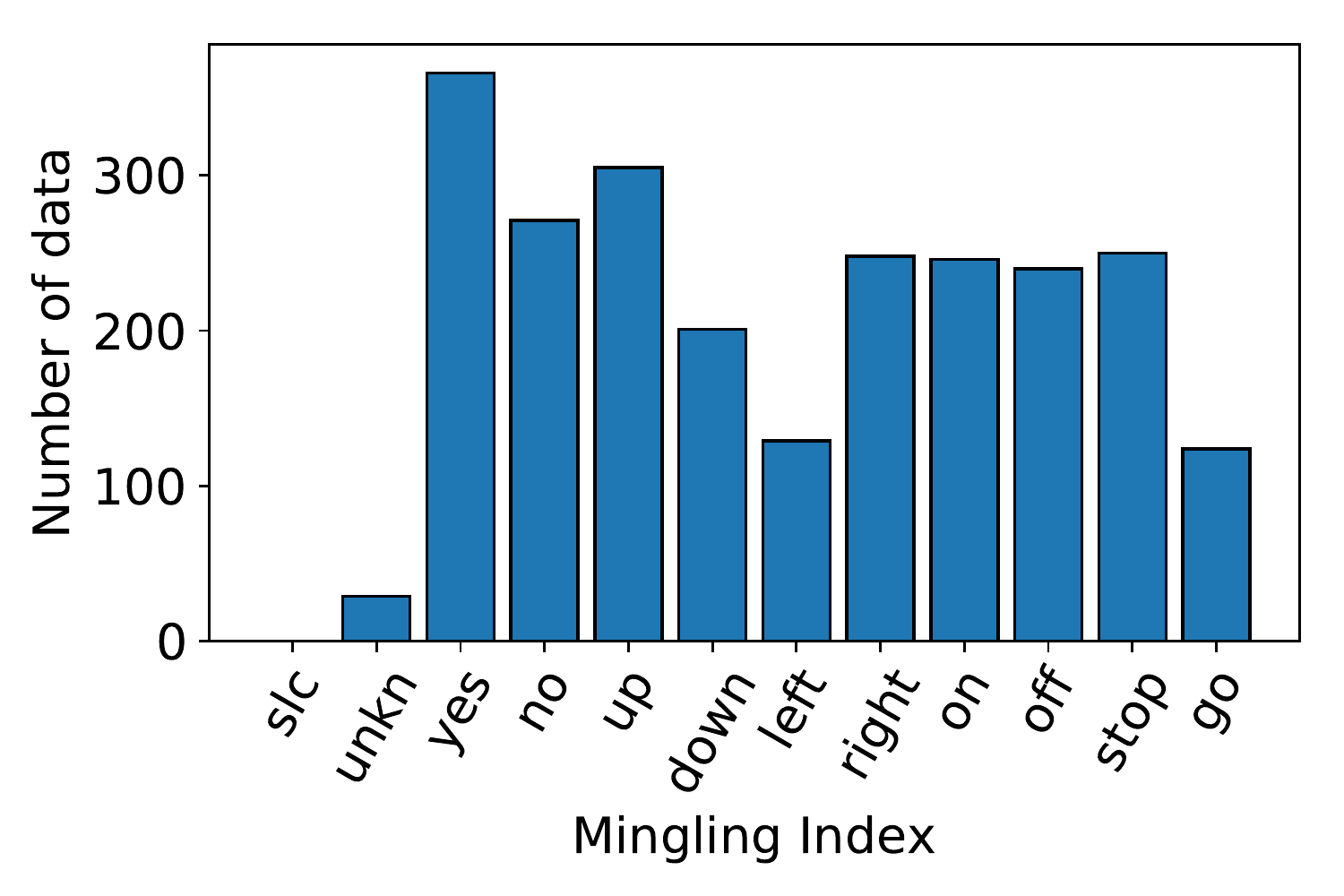}
}
\subfigure[Mingling index 0.4]{
\includegraphics[width = 0.31\textwidth]{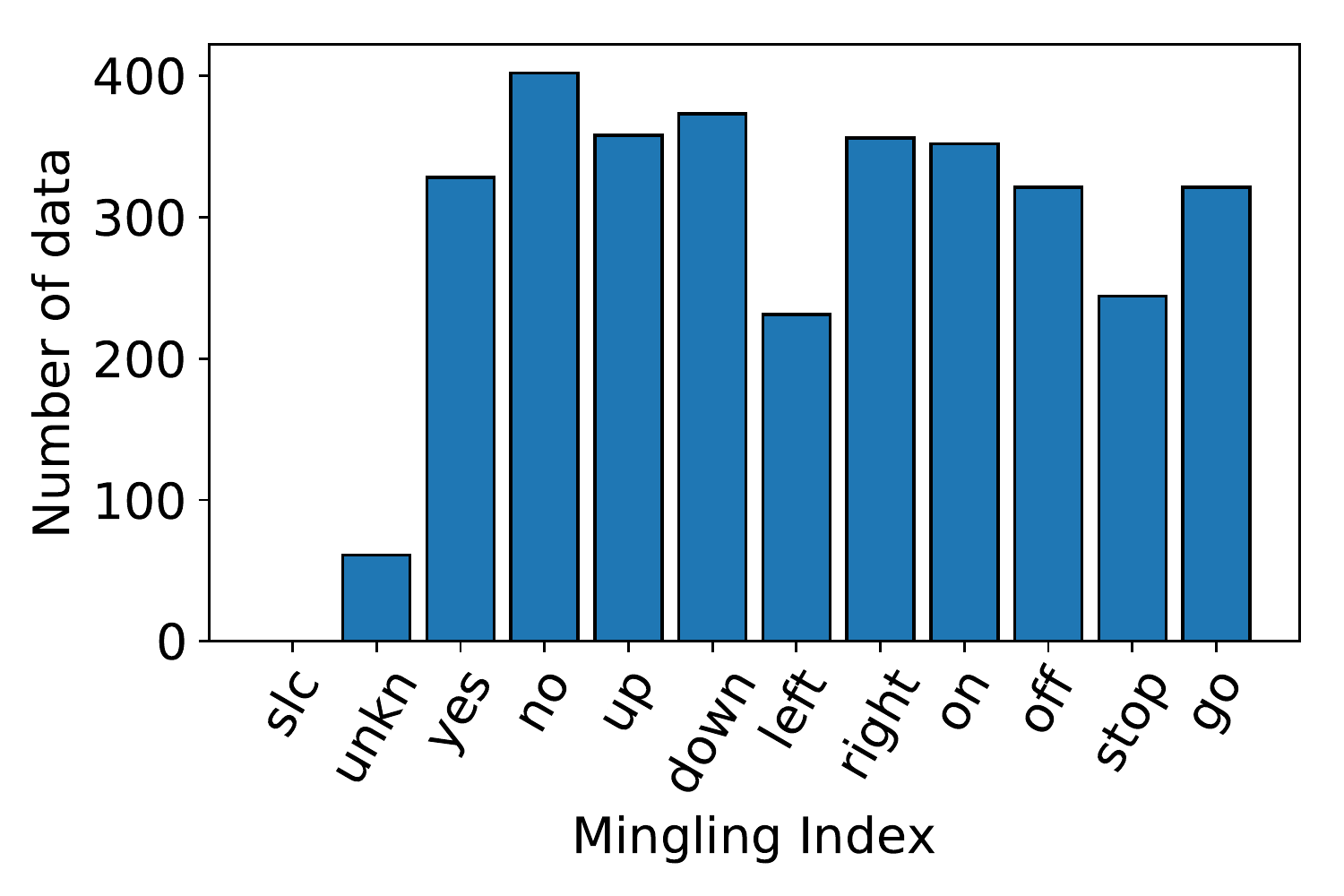}
}

\subfigure[Mingling index 0.6]{
\includegraphics[width = 0.31\textwidth]{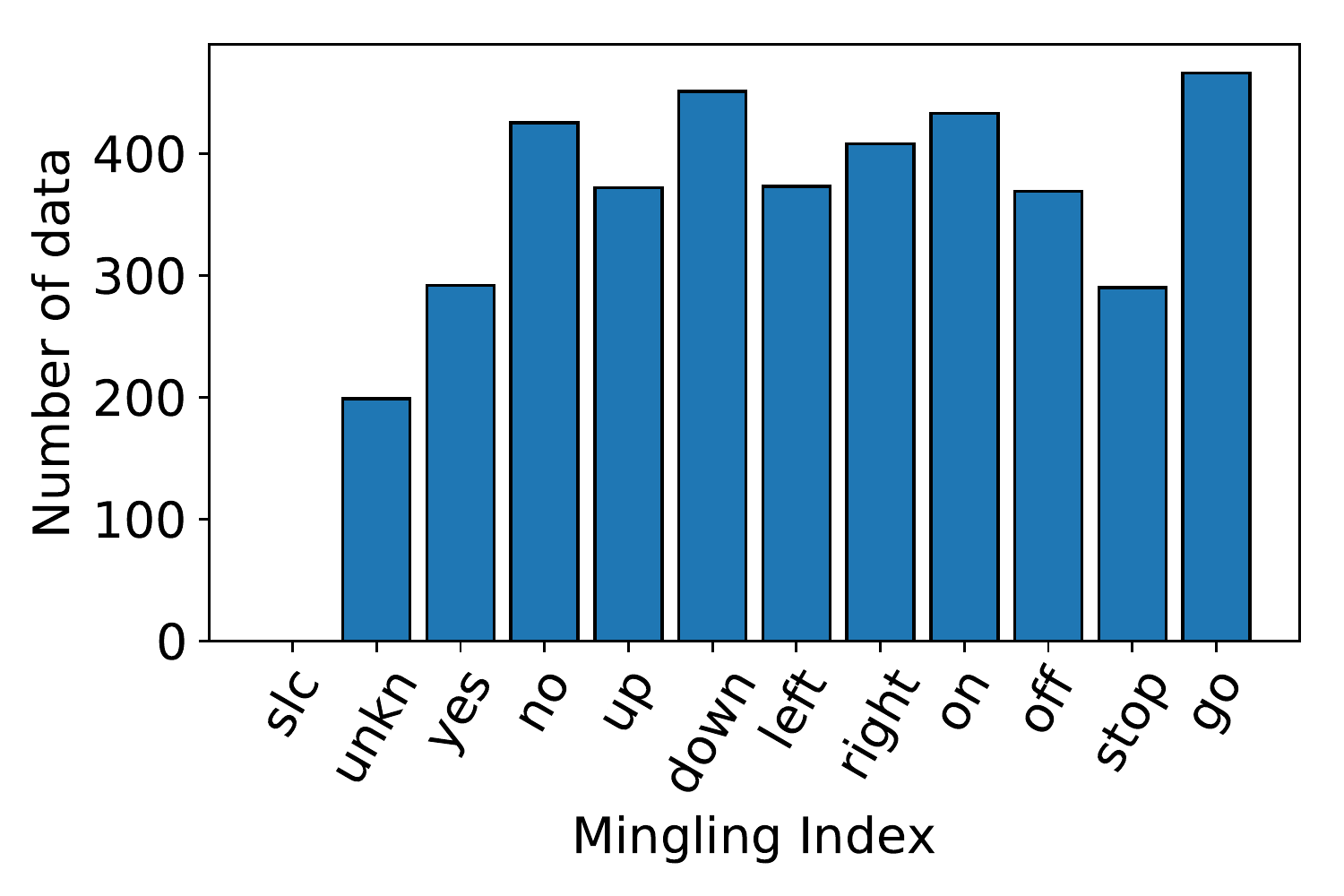}
}
\subfigure[Mingling index 0.8]{
\includegraphics[width = 0.31\textwidth]{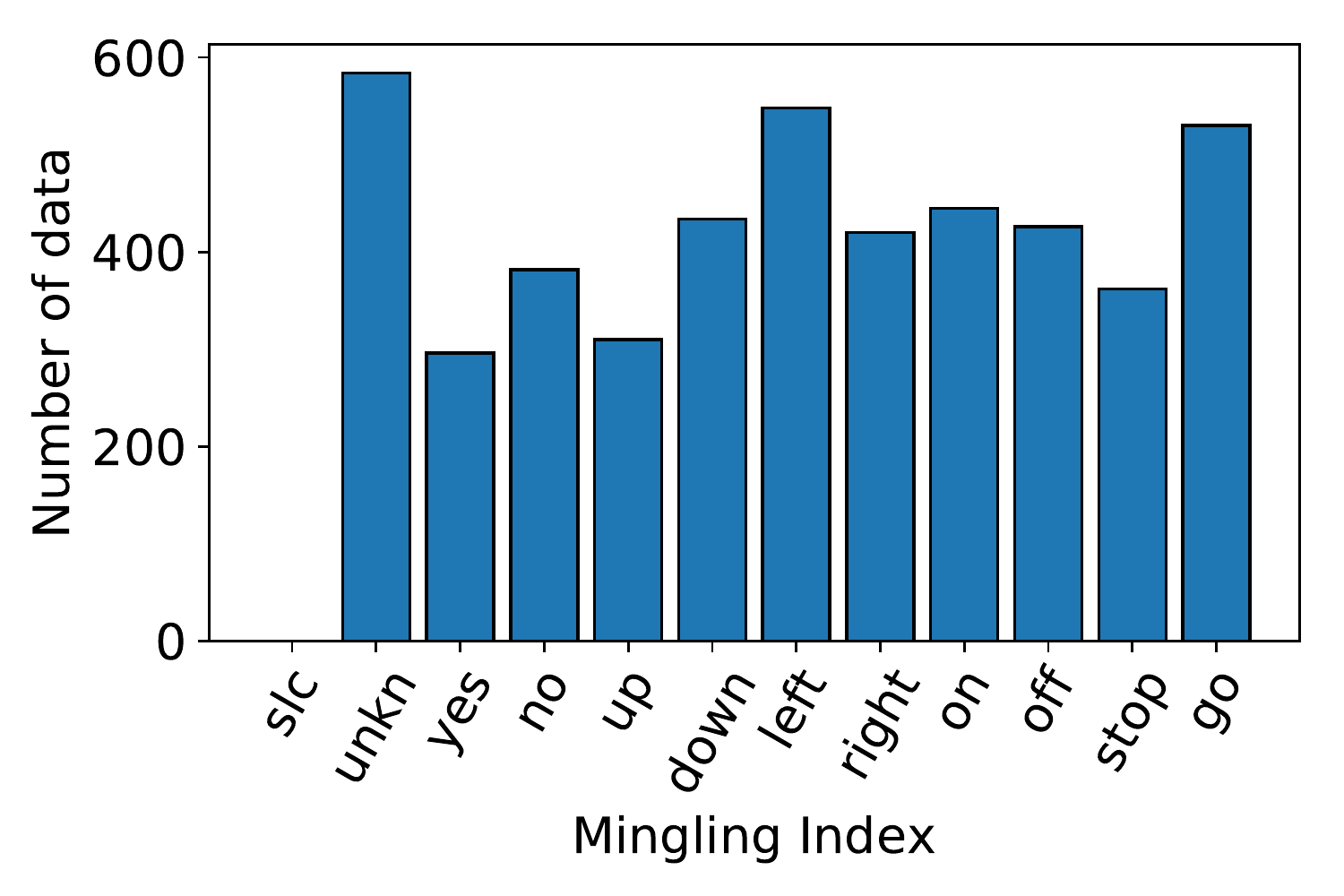}
}
\subfigure[Mingling index 1.0]{
\includegraphics[width = 0.31\textwidth]{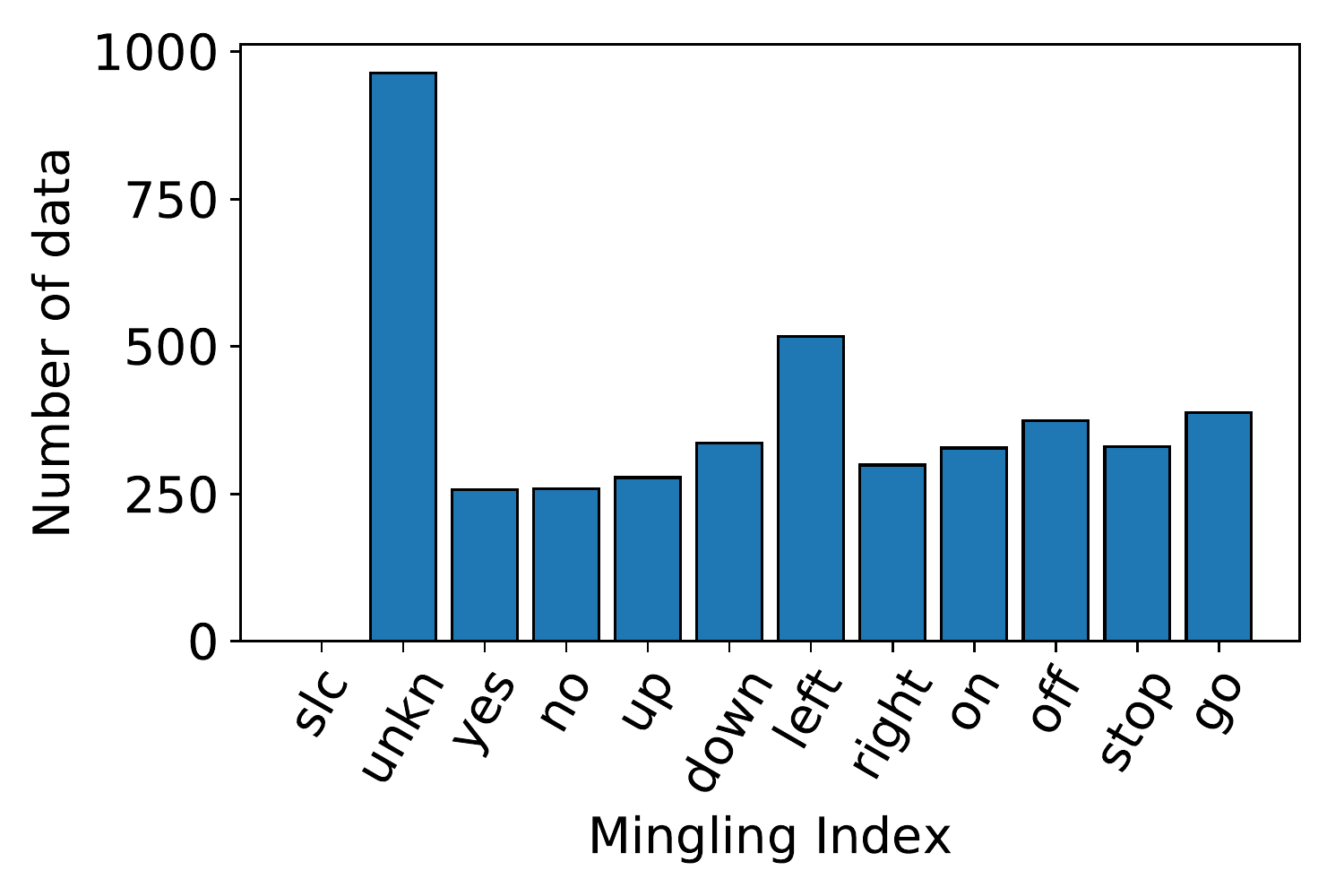}
}
\caption{Speech experiment. The distribution of data with respect to different class labels under different mingling index values. 'slc' indicates the silence class, and 'unkn' indicates the unknown class where the utterance does not belong to any of these ten target command words.  }
\label{fig:Speech_other_hist_mingling}
\end{figure*}
\end{document}